\newcommand{\mycite}[1]{\citeauthor{#1}~[\citeyear{#1}]}
\newcommand{\code}[1]{\texttt{#1}}
\newcommand{\abs}[1]{\left|{#1}\right|}
\newcommand{\x}{\cdot}
\newcommand{\argmin}[1]{\underset{#1}{\mathrm{argmin}}}
\newcommand{\set}[1]{\left\{{#1}\right\}}
\newcommand{\tuple}[1]{\left({#1}\right)}
\newcommand{\floor}[1]{\left\lfloor{#1}\right\rfloor}
\newcommand{\InSet}{\textsc{InSet}}
\newcommand{\MaxSizeSplit}{\textsc{MaxSizeSplit}}
\newcommand{\Information}{\textsc{Information}}
\newcommand{\ExpSizeSplit}{\textsc{ExpSize\-Split}}
\newcommand{\MostParts}{\textsc{MostParts}}
\newcommand{\Total}{\textsc{Total}}
\newcommand{\Expected}{\textsc{Expected}}
\newcommand{\TurnsNeeded}{\textsc{TurnsNeeded}}
\newcommand{\UG}[1]{\textsc{UG}\left(#1\right)}
\newcommand{\Bound}[1]{\textsc{Bound}\left(#1\right)}
\newcommand{\MaxSplits}[1]{\textsc{MaxSplits}\left(#1\right)}
\newcommand{\MinTotal}[1]{\textsc{MinTotal}\left(#1\right)}
\newcommand{\nSplits}[1]{\textsc{nSplits}\left(#1\right)}
\newtheorem{theorem}{Theorem}
\newtheorem{definition}{Definition}
\newtheorem{corollary}{Corollary}
\newtheorem{lemma}{Lemma}
\newtheorem{proposition}{Proposition}
\newtheorem{property}{Property}
\newenvironment{hproof}{%
  \proof}{\endproof}
\title{On Optimal Strategies for Wordle and General Guessing Games}
\author{
Michael Cunanan
\And
Michael Thielscher \\
\affiliations
School of Computer Science and Engineering, University of New South Wales\\
\emails
cunananm2000@gmail.com, mit@unsw.edu.au
}
\begin{document}

\maketitle

\begin{abstract}
    The recent popularity of Wordle has revived interest in guessing games. We develop a general method for finding optimal strategies for guessing games while avoiding an exhaustive search. Our main contributions are several theorems that build towards a general theory to prove the optimality of a strategy for a guessing game. This work is developed to apply to any guessing game, but we use Wordle as an example to present concrete results.
\end{abstract}

\section{Introduction}\label{ch:intro}

Mastermind is a guessing game that has been studied extensively in the past
\cite{knuth,stuckman2005mastermind,doerr2016playing,glazik2021bounds}.
Such work has not seemed to be carried over to other guessing games, however. Our vision is to have AI agents learn how to approach any game of this kind, similar to a general-game-playing setting \cite{genese:intern,C:5}. To do this we supply human intelligence to guide this area of research; this paper aims to do just that for general guessing games. We also aim to add mathematical rigour to the study of meta-reasoning in guessing games, such as in \cite{filman1983metalanguage}, or to aid in developing predicates for grounded languages such as in \cite{thomason2016learning}.

The timing of this publication coincides with the recent popularity of the online game Wordle \cite{wordle}, which we will use for our example guessing game of choice.
Wordle is a word game that was published in October 2021. Since then, it has gained significant popularity, with over 300,000 daily users in January 2022 \cite{wordleusers}. There has been widespread interest in the general community for an optimal approach to the game, with several websites making unsupported claims to have determined the best strategy.

The game itself is a guessing game in which players must deduce a hidden word using clues that the game gives in response to the player's guesses, with a fixed limit of 6 guesses allowed. The exact details of these clues and the structure of the game will be explored in further detail in the next section.

The popularity of Wordle has also caused several variants to appear, including with
\begin{itemize}
    \item Different word sets (e.g. Bardle \cite{bardle}, FFXIV\-rdle \cite{ffxivrdle})
    \item Multiple games at the same time (e.g. Dordle \cite{dordle}, Tridle \cite{tridle}, Sexaginta-quattuordle  \cite{64ordle})
    \item Completely different forms of input (e.g. Heardle \cite{heardle}, Chessle \cite{chessle}).
\end{itemize}
As such, the focus of this paper lies in guessing games in general, but we will use Wordle as the main example throughout.


Our main contribution is a series of theorems that build towards a general method to determine if a strategy is optimal or not, without the need for an exhaustive search. These formal results can also be used to find an optimal strategy. The theorems we present are generalized to work for any guessing games to automatically find strategies and prove their optimality. We specifically demonstrate using these theorems to show the Wordle strategy found by our framework is optimal. We also present a method of determining the next optimal guess, which to our knowledge, is a novel approach.

The remainder of the paper is organized as follows. In the next section, we recapitulate the basic components of guessing games in general, including Wordle, and we recapitulate known heuristics from the literature on Mastermind. In Section~\ref{ch:finding_good_strats}, we show how to combine heuristics to search for good strategies. In Section~\ref{sec:proving}, we present novel and general theorems by which a strategy can be proved optimal without an exhaustive search. In the section that follows, we demonstrate using the general method and theorems on Wordle and some of its variants. We conclude in Section~\ref{sec:conclusion}.%
\footnote{This is an extended version, with full proofs and additional examples in the appendix, of a paper accepted at IJCAI 2023.}
\section{Background}\label{ch:background}
\subsection{Guessing Games}
In this section, we define exactly what we consider to be general guessing games, following similar definitions by \mycite{minguesses} and \mycite{focardi2012guessing}. \citeauthor{minguesses} refer to a guessing game as an `interactive knowledge transfer model', but for the sake of readability we will use the term `guessing game'.

In a \textbf{guessing game}, we have two parties: a \textbf{learner} and a \textbf{teacher}. The teacher's goal is to communicate some \textbf{secret} $s$ that is initially hidden from the learner. The learner submits a \textbf{guess} $g$ to the teacher, to which the teacher responds with some \textbf{response} $r$. The teacher's responses are to be used as clues by the learner to deduce what $s$ is. The teacher computes responses using an \textbf{answering function} $a$; this function is known to both parties. These communications continue until the teacher responds with the \textbf{affirmative response} $r^*$, at which point we say the learner has learnt the secret and has won the game. The following definition summarizes the components of a general guessing game.
\begin{definition}[Guessing game]\label{def:guessing_game}
A guessing game can be uniquely represented as a tuple $(G, S, R, r^*, a)$, where
\begin{align*}
    G &= \text{Set of allowable guesses}\\
    S &= \text{Set of allowable secrets, with } S \subseteq G\\
    R &= \text{Set of possible responses, with } \abs{R} > 1\\
    r^* &= \text{Affirmative response, with } r^* \in R\\
    a &= \text{Answering function of type } G \times S \to R,\\
    &\quad \quad \text{with } \forall g \in G \ \forall s  \in S: \ a(g,s) = r^* \iff g = s
\end{align*}
and $G$, $S$ and $R$ are finite sets.
\end{definition}

We assume that $S$ is known to the learner, though exactly which element of $S$ is the secret is not known. This may not be true in practice for human players, but we do this because any guessing game should have a well-defined domain of secrets that an AI certainly could use.

\subsection{Wordle}
In Wordle\footnote{Wordle has several variants. In this paper we consider Wordle in its default mode.}, the player (learner) must deduce a common 5-letter English word chosen by the computer (teacher). Similar to the well-known guessing game Mastermind, the player's guesses are met with colour-coded responses to guide them toward the answer. The secret in Wordle changes daily.

We provide an example play of Wordle in Figure~\ref{fig:wordle} using the game from March 22 2022. The player's first guess was \code{TARES}. The computer assigned a grey colour to \code{T}, \code{A}, \code{R}, \code{E}, and so none of those letters appear in the secret word. The letter \code{S} however was assigned a yellow colour, which indicated that \code{S} does appear in the secret, but not in its current position (i.e.\ the $5^{th}$ position). The player's next guess was \code{SPOIL}. Now \code{S} and \code{O} are assigned green, which indicates \code{S} and \code{O} appear in those positions in the secret word. The player uses these colour encodings as responses from the computer to determine what to guess next throughout the game. Wordle ends when the player receives an all-green encoding, as seen in the final row.

To represent colours we use $\code{0} = \text{Grey}$, $\code{1} = \text{Yellow}$ and $\code{2} = \text{Green}$ in the following definition.

\begin{figure}
    \centering
    \includegraphics[width=0.8\linewidth]{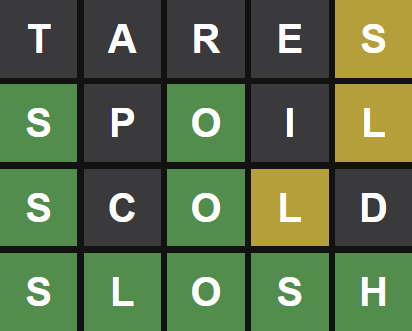}
    \caption{March 22 2022 Wordle puzzle, completed in 4 turns.}
    \label{fig:wordle}
\end{figure}

\begin{definition}[Wordle]\label{def:wordle}
Wordle is a guessing game $(G_W, S_W, R_W, r^*_W, a_W)$ according to Definition~\ref{def:guessing_game}, where
\begin{align*}
    G_W &= \text{All 5 letter words, curated by the developer.}\\
    S_W &= \text{Common 5 letter words decided by the developer.}\\
    R_W &= \set{\code{00000}, \code{00001},\dots,\code{22220},\code{22222}}\\
    r^*_W &= \code{22222}\\
    a_W &= \text{See the above paragraph and Figure~\ref{fig:wordle} for example.}
\end{align*}

\end{definition}



$G_W$ and $S_W$ are publicly known sets and can be found in Wordle's source code. It is worth noting however that this set has been altered a few times since the game's creation. Our research into Wordle strategies was initially conducted on Wordle's original sets of guesses and secrets (before 15th February 2022), and so we describe our work using these sets, with ${\abs{G_W} = 12972}$ and ${\abs{S_W} = 2315}$. Importantly however, the results presented in this paper could easily be replicated for the updated Wordle sets, or any other word set in general, as we will see.

\subsection{Strategies}\label{subsec:strats}
In this section, we formalize how we intend agents to play guessing games by defining strategies; as well as how we intend to compare the performance of strategies.

As a learner plays a guessing game, they should be using the previously submitted guesses and the corresponding received responses to make informed decisions about what guess to submit next. We may capture this learned information using \textbf{candidates}.

\begin{definition}[Candidates]\label{def:candidates}
Suppose in a guessing game $(G,S,R,r^*,a)$ the guesses and responses so far are $\tuple{(g_1,r_1),\dots,(g_n,r_n)}$, i.e.\ guess $g_i$ was met with response $r_i$. The \textbf{candidate set} $C$ is defined as
\begin{align*}
    C = \bigcap_{i=1}^n \set{s \in S: a(g_i, s) = r_i}
\end{align*}
If there are no guesses or responses so far, $C = S$.
\end{definition}

Candidates are elements of $S$ that \textit{could} be the secret word according to the information contained in the guesses and responses played so far. At the start of a game, the set of candidates is $S$ as no information about the secret has been communicated to the learner yet.

A \textbf{strategy} is a learner's method of determining what guess to submit next, recalling the goal is to get response $r^*$. We make use of candidate sets in formalising this notion.

\begin{definition}[Strategy]\label{def:strategy}
A strategy $\sigma$ can be defined as
\begin{align*}
    \sigma: P(S) \to G
\end{align*}
where $P(S)$ is the power set of $S$. Then $\sigma(C)$, for some candidate set $C \subseteq S$, would represent what guess to submit next.
\end{definition}

Note that the co-domain of $\sigma$ is $G$; the learner is allowed to make guesses that may not be possible secrets. Strategies also may be non-deterministic, but in this paper, we only consider ones that are deterministic.

On receiving a response from the teacher, we need to filter the candidate set appropriately. After submitting a guess, we can know the possible future candidate sets; in this sense each guess can \textbf{split} the candidate set.

\begin{definition}[Split]\label{def:split}
For a candidate set $C$, we say that guessing $g$ creates \textbf{splits} categorized by response $r$:
\begin{align*}
    C_{g,r} = \set{c \in C: a(g,c) = r}
\end{align*}
\end{definition}

We can calculate our score playing according to any strategy using \TurnsNeeded.
\begin{definition}[$\TurnsNeeded$]\label{def:turns_needed}
Suppose we play using strategy $\sigma$ and the hidden secret is $s$. Start with candidate set $C = S$ and submit guess $g = \sigma(C)$. If response $r^*$ is received, then we are done. Otherwise, replace $C$ with $C_{g,r}$ and again submit guess $\sigma(C)$. Repeat until response $r^*$ is received. $\TurnsNeeded(\sigma, s)$ is the number of guesses submitted.
\end{definition}
Note that in this process, the player does not use $s$ to decide what to guess; we only use the remaining candidates and $\sigma$ to determine what to guess next.

The objective of this paper is to find an `optimal' strategy. Existing papers measured the performance of strategies by taking the \Expected{} number of guesses needed (taken over all secrets in $S$) \cite{koyamalai,focardi2012guessing}. Other authors such as \mycite{yams} also considered the maximum number of guesses needed, but the primary goal historically has always been to minimize the \Expected{} score. In this paper we will be using an equivalent metric \Total{} as defined below.

\begin{definition}[\Total{} metric]\label{def:total}
For a strategy $\sigma$, \Total{}$(\sigma)$ is the total number of turns needed over all secrets in $S$:
\begin{align*}
    \Total{}(\sigma) = \sum_{s \in S}\TurnsNeeded(\sigma, s) 
\end{align*}
\end{definition}

It should be clear that a strategy that is optimal according to the \Expected{} case is also optimal according to the \Total~(${=}\,|S|\x\Expected$) case. We use \Total{} however because it makes the work in Section~\ref{sec:proving} much easier to read. 

\subsection{Known Strategies}\label{sec:known_strats}
In the extensive literature on Mastermind \cite{knuth,diag,yams,berghman2009efficient}, several strategies have been developed and tested. We restate some of these strategies in this section for later reference. They all determine what guess to submit next by assigning each guess~$g$ a numerical score based on the current candidate set using a \textbf{valuation}.

\begin{definition}[Valuation-based strategy]\label{def:val:strat}
\begin{align*}
    \sigma_{v}(C) = \argmin{g \in G} \ v(g, C)
\end{align*}
where $C$ is a candidate set and $v$ is some function of type ${G \times P(S) \to \mathbb{R}}$. We call $v$ a \textbf{valuation}. In tie-breaks, default to lexicographical ordering.
\end{definition}
A simple (yet useful) valuation is the following:
\begin{align*}
    \InSet(g, C) = -\mathbb{I}[g \in C]
\end{align*}
where $\mathbb{I}$ is the indicator function. This is adapted from one of the earliest published algorithms on Mastermind \cite{nextvalid}.  We use the negative sign since we are taking the $\min$ in Definition~\ref{def:val:strat}, and prioritising guesses that \textit{are} in $C$.

Strategies developed for Mastermind focused on using different valuations such as:
\begin{align*}
    \MaxSizeSplit{}(g, C) &= \max_{r \in R} \abs{C_{g,r}}\\
    \ExpSizeSplit{}(g,C) &= \sum_{r \in R} \left(\frac{\abs{C_{g,r}}}{\abs{C}} \x \abs{C_{g,r}}\right)\\
    \Information{}(g, C) &= \sum_{r \in R} \frac{\abs{C_{g,r}}}{\abs{C}} \log_2 \frac{\abs{C_{g,r}}}{\abs{C}}\\
    \MostParts{}(g,C) &= -\nSplits{g, C}
\end{align*}
where
\begin{align*}
    \nSplits{g, C} = \abs{\set{r \in R: \abs{C_{g,r}} \neq 0}}
\end{align*}
\mycite{knuth} used \MaxSizeSplit{}, \mycite{yams} introduced \MostParts{} and \mycite{diag} used \Information{}.
 There are several other valuations developed for Mastermind, but we only included the ones which showed promising results in existing literature.
\section{Finding Good Strategies}\label{ch:finding_good_strats}
Before we can prove the optimality of a strategy for any given guessing game, it is necessary to first find a \textit{good} one. We do this by first using \citeauthor{knuth}'s~[\citeyear{knuth}] paper on Mastermind for inspiration, and revisit Definition~\ref{def:total} to develop a method by which we may search for strategies with low \textsc{Total} scores.



\subsection{Combining Known Valuations}\label{subsec:combining_valuations}
\mycite{knuth} used the \MaxSizeSplit{} evaluation. In the event several guesses had equally minimal valuations, he suggested that for the next guess, ``a valid one should be used'', i.e.\ a guess that is also a candidate. He made no explicit rule about which to choose if there are multiple guesses with equally minimal valuations \textbf{and} that are candidates. We resolve this in the context of a general guessing game.

\begin{definition}[Combined valuations]\label{def:val:combined}
For valuations $v_1,\dots,v_n$, we can combine them to assign each guess $g$ a tuple of values:
\begin{align*}
    V(g,C) = \tuple{v_1(g,C), \dots, v_n(g,C)}
\end{align*}
We can then compare tuples lexicographically.
\end{definition}
Only if the combined valuations together are the same for two guesses, \textit{then} we may revert to choosing alphabetically, but ideally we would append more valuations to avoid this.

In choosing which valuations to combine for Wordle, we tested\footnote{Full source code for this experiment and all subsequent ones is available at \hyperlink{https://github.com/cunananm2000/WordleBot}{https://github.com/cunananm2000/WordleBot}.} every non-empty ordered combination of \InSet{}, \MaxSizeSplit{}, \Information{}, \MostParts{}, \ExpSizeSplit{}, giving 325 combined valuations.
\begin{table}
    \centering
    \begin{tabular}{llr}
    \toprule
    \textbf{Rank} &                                \textbf{Combined Valuations} &   \Total{}\\
    \midrule
    1   &                       \MostParts{},\InSet{},\textsc{ESS} &  7944\\
    2   &          \MostParts{},\InSet{},\textsc{ESS},\textsc{MSS} &  7944\\
    3   &          \MostParts{},\InSet{},\textsc{MSS},\textsc{ESS} &  7944\\
    \vdots & \vdots & \vdots\\
    323 &                                       \textsc{MSS} &  8510\\
    324 &                                 \InSet{},\textsc{MSS} &  8516\\
    325 &                                              \InSet{} &  10069\\
    \bottomrule
    \end{tabular}
    \caption{Combined valuations on Wordle. For space, we shorten \ExpSizeSplit{} to \textsc{ESS} and \MaxSizeSplit{} to \textsc{MSS}.}
    \label{tab:combined_vals}
\end{table}

As shown by Table~\ref{tab:combined_vals}, using combined valuations does offer an improvement over using any single valuation alone. Moreover, all the best-performing combinations use \MostParts{} as the main valuation, which makes sense as it was the best-performing standalone valuation for Mastermind \cite{yams}.
\subsection{Searching For Strategies}\label{sec:searching_for_strats}
\citeauthor{minguesses}~[\citeyear{minguesses}] presented an equation to calculate the minimum \Expected{} score achievable.
We adapt this to an equation to calculate the minimum \Total{} score:
\begin{definition}[$\textsc{MinTotal}$]\label{def:min_total}
For a non-empty candidate set $C$, the minimum \Total{} number of guesses needed to reach all candidates in $C$ is given by
\begin{align*}
    \textsc{MinTotal}(C) = |C| + \min_{g \in G} \sum_{r \in R\backslash\set{r^*}}\textsc{MinTotal}(C_{g,r})
\end{align*}
If $C$ is empty, then $\textsc{MinTotal}(C) = 0$.
\end{definition}
Unfortunately, it isn't feasible to calculate \textsc{MinTotal} for most candidate sets in real guessing games such as Wordle. The recursive definition means that with each calculation of \textsc{MinTotal} we loop over $g \in G$.  If we were to limit our recursion depth to $d$, then our algorithm would run in $O(|G|^d)$. To limit this exponential growth, we propose that instead of searching over all $g \in G$, only search over the `best' $n$ guesses in $G$; we call $n$ the \textbf{search breadth}.

\begin{definition}[Approximate \textsc{MinTotal}]\label{def:approx_mintotal}
For a candidate set $C$, the \textbf{approximate} minimum total number of guesses needed is $\textsc{ApMinTotal}(C, n)$, defined by replacing $g \in G$ in Definition~\ref{def:min_total} with $g \in \set{\text{Best } n \text{ guesses}}$.
\end{definition}
We will be using the topmost combined valuation from Table~\ref{tab:combined_vals}, $(\MostParts{}, \InSet{}, \ExpSizeSplit{})$, to determine what the best $n$ guesses are; taking the $n$ guesses with the lowest valuations.

Depending on how exhaustively we want to look for strategies we may change $n$; a higher value of $n$ means a more exhaustive search. It should be clear, then, that ${\textsc{ApMinTotal}(C, n) \geq \textsc{MinTotal}(C)}$ for any $n\geq 1$, and that at $n = \abs{G}$ the two are equal.

As mentioned previously, we may use the $\mathrm{argmin}$ of the `otherwise' case of Definition~\ref{def:approx_mintotal} to extract a strategy. 

Since Wordle is the main guessing game for this paper, we first show our results in Table~\ref{tab:apmintotal:wordle}. As we expect, the \Total{} decreases as the search breadth increases, as this means we search more exhaustively.

\begin{table}
    \centering
    \begin{tabular}{ccc}
    \toprule
    $n$ & \textsc{ApMinTotal}$(S_W, n)$ &\textbf{Starter}\\
    \midrule
    1 & 7944 & \code{TRACE}\\
    5 & 7921 & \code{SALET}\\
    10 & 7920 & \code{SALET}\\
    20 & 7920 & \code{SALET}\\
    \bottomrule
    \end{tabular}
    \caption{Using \textsc{ApMinTotal} to find a good strategy for Wordle.}
    \label{tab:apmintotal:wordle}
\end{table}

We repeated this process of using Definition~\ref{def:approx_mintotal} to search for strategies with low \Total{} scores on the following variants of Wordle:
\begin{itemize}
    \item \textbf{FFXIVrdle}: ${S = }$ References to the video game Final Fantasy 14, e.g.\ \code{HILDA}.
    \item \textbf{Mininerdle}: ${G = S = }$ 6 character math equations, e.g.\ \code{4*7=28}.
    \item \textbf{Nerdle}: ${G = S = }$ 8 character math equations, e.g.\ \code{8*3+2=26}.
    \item \textbf{Primel}: ${G = S = }$ 5 digit prime numbers, e.g.\ \code{42821}.
\end{itemize}
Results are shown in Table~\ref{tab:wordle_variants}.
\begin{table}
    \centering
    \begin{tabular}{cccc}
    \toprule
    \textbf{Game} & $|G|$ & $|S|$ & \textsc{ApMinTotal}$(S, 20)$\\
    \midrule
    FFXIVrdle & 849 & 168 & 432\\
    Mininerdle & 206 & 206 & 544\\
    Nerdle & 17723 & 17723 & 53512\\
    Primel & 8363 & 8363 & 29011\\
    \bottomrule
    \end{tabular}
    \caption{Using \textsc{ApMinTotal} to find a good strategy for other popular guessing games, for which we purposely show the \Expected{} rather than \Total{}. The choice of breadth 20 was due to Table~\ref{tab:apmintotal:wordle}.}
    \label{tab:wordle_variants}
\end{table}

\section{Proving Optimality}\label{sec:proving}
The previous section was focused on using heuristics to find good strategies; now we'd like to determine if the best ones found were indeed optimal. First, we revisit Definition~\ref{def:min_total}, and explore the idea of representing strategies as `trees'. Doing so allows us to prove several propositions which we use to create novel theorems by which we can prove a strategy optimal \emph{without exhaustive search}.

\subsection{Useful Guesses}\label{subsec:ug}
In order to help restrict the search space, we define the notion of \textit{usefulness}.
\begin{definition}\label{def:ug}
For a candidate set $C$ with $\abs{C} > 1$, a guess ${g \in G}$ is \textbf{useful} w.r.t.\ $C$ iff $\nSplits{g,C} \neq 1$.
We notate this as 
$g \in \UG{C}$ for short.
If $\abs{C} \leq 1$, then $\UG{C} = C$.
\end{definition}
\begin{property}\label{property:ug}
Equivalently for ${|C| > 1}$, ${g \in \UG{C}}$ iff $\abs{C_{g,r}} < \abs{C}$ for all $r \in R$.
\end{property}
\begin{lemma}\label{lemma:useful_best}
For any non-empty $C \subseteq S$,
\begin{align*}
\MinTotal{C} &= \!\!\min_{g \in \UG{C}} \abs{C} + \!\!\!\sum_{r \in R\backslash\set{r^*}} \MinTotal{C_{g,r}}
\end{align*}
Note the replacement of ${g \in G}$ from Definition~\ref{def:min_total} with ${g \in \UG{C}}$.
\end{lemma}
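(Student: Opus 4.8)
The plan is to show that replacing $g \in G$ by $g \in \UG{C}$ in Definition~\ref{def:min_total} does not change the minimum. Since $\UG{C} \subseteq G$, the minimum over $\UG{C}$ is at least the minimum over $G$, so it suffices to show that no useless guess can beat every useful one; equivalently, for every $g \in G \setminus \UG{C}$ there is a $g' \in \UG{C}$ with
\begin{align*}
\abs{C} + \sum_{r \in R\backslash\set{r^*}} \MinTotal{C_{g',r}} \;\leq\; \abs{C} + \sum_{r \in R\backslash\set{r^*}} \MinTotal{C_{g,r}}.
\end{align*}
The proof is by strong induction on $\abs{C}$. The base case $\abs{C} \leq 1$ is immediate from the convention $\UG{C} = C$ in Definition~\ref{def:ug} (when $\abs{C}=1$ the unique candidate is a winning guess, and when $\abs{C}=0$ both sides are $0$).

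For the inductive step, take $\abs{C} > 1$ and any useless $g \notin \UG{C}$, so $\nSplits{g,C} = 1$. A useless guess $g$ puts all of $C$ into a single split $C_{g,r_0}$ for exactly one response $r_0$. We must have $r_0 \neq r^*$: if $r_0 = r^*$ then $a(g,c) = r^*$ for every $c \in C$, which by Definition~\ref{def:guessing_game} forces $g = c$ for all $c \in C$, impossible since $\abs{C} > 1$. Hence the recursion for this $g$ reduces to $\abs{C} + \MinTotal{C}$ — that is, guessing $g$ wastes a turn and leaves the candidate set unchanged. Now unwind $\MinTotal{C}$ one more level using Definition~\ref{def:min_total}: there is some $g^\star \in G$ attaining $\MinTotal{C} = \abs{C} + \sum_{r \neq r^*} \MinTotal{C_{g^\star,r}}$. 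If $g^\star$ is useful we are done, taking $g' = g^\star$, since then the cost of $g$ is $2\abs{C} + \sum_r \MinTotal{C_{g^\star,r}}$, strictly larger by $\abs{C} \geq 1$ than the cost of $g^\star$. If $g^\star$ is itself useless, the same argument shows $\MinTotal{C} = \abs{C} + \MinTotal{C}$, i.e. $\abs{C} = 0$, contradicting $\abs{C} > 1$; so in fact the optimal $g^\star$ in Definition~\ref{def:min_total} is always useful, which is exactly the claim.

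The one place needing care — the main obstacle — is handling the self-reference cleanly: a useless guess sends $C$ to $C_{g,r_0} = C$, so one cannot naively "apply the inductive hypothesis to a smaller set." The resolution above is to argue directly that a useless guess is dominated by \emph{any} minimizer $g^\star$ from the unrestricted Definition~\ref{def:min_total} (adding a useless move only increases \Total{} by $\abs{C} \geq 1$), and then to observe that $g^\star$ cannot itself be useless without forcing $\abs{C} = 0$. This makes the induction on $\abs{C}$ essentially vacuous for this lemma — the dominance is immediate — but stating it inductively is the safe way to make sure the recursive $\MinTotal{\cdot}$ values on the right-hand side are well-defined throughout. An alternative framing, which I would mention, is to use Property~\ref{property:ug}: $g$ is useful iff every split strictly shrinks $C$, which guarantees the recursion in Lemma~\ref{lemma:useful_best} is well-founded and terminates, whereas a useless guess creates an infinite loop that can never be optimal.
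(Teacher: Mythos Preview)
Your proof is correct and follows essentially the same approach as the paper: both arguments reduce to the observation that if a minimizer $g^\star$ were useless, then $\MinTotal{C} = \abs{C} + \MinTotal{C}$, forcing $\abs{C}=0$. Your version is more careful (you explicitly verify $r_0 \neq r^*$, which the paper elides) but also more elaborate than necessary---as you yourself note, the induction scaffolding is vacuous, and the paper dispenses with it entirely.
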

\begin{proof}
Suppose the minimum was achieved by some $g \not\in \UG{C}$, so $C_{g,r} = C$ for a specific $r$ and $C_{g,r'} = \emptyset$ for $r'\in R\setminus\{r\}$. By Definition~\ref{def:min_total} this would imply $\MinTotal{C} = |C| + \MinTotal{C}$, so $|C|=0$, contrary to the assumption. 
\end{proof}
This also shows that optimal strategies can only have useful guesses.

\subsection{Setup}\label{subsec:setup}

\begin{definition}[$V^*$]\label{def:mintotal_v*}
By splitting Lemma~\ref{lemma:useful_best}, we define
\begin{align*}
    \MinTotal{C} &= \min_{g \in \UG{C}} V^*(g, C)\\
    V^*(g, C) &= \abs{C} + \sum_{r \in R\backslash\set{r^*}} \MinTotal{C_{g,r}}
\end{align*}
and $\MinTotal{\emptyset} = 0$.
\end{definition}
$V^*(g,C)$ then represents the minimum \Total{}, starting from candidate set $C$, \textbf{provided} we guess ${g \in G}$ first. The \textbf{optimal} strategy would then be achieved by taking the $\mathrm{argmin}$. 

$\MinTotal{C}$ and $V^*(g,C)$ are what we should try to estimate. Finding an upper bound for $\MinTotal{C}$ is easy; as noted previously $\textsc{ApMinTotal}(C,n)$ is an upper bound for any $n$. We notate such an upper bound as $\textsc{UB}(C)$. As per Table~\ref{tab:apmintotal:wordle}, the lowest known value found for Wordle, $\textsc{UB}(S_W)$, is 7920. Lower bounding $V^*$ is important via the following theorem:
\begin{theorem}\label{th:v*_estimate}
Suppose we have some function $\textsc{UB}$ such that $\textsc{UB}(C) \geq \MinTotal{C}$ for any $C \subseteq S$. If we can find some estimate function $V'$ such that $V'(g,C) \leq V^*(g,C)$ for any guess ${g \in G}$ and $C \subseteq S$, then for any $g' \in G$
\begin{align*}
    V'(g',C) > \textsc{UB}(C) \implies g' \neq \argmin{g \in \UG{C}}\, V^*(g, C)
\end{align*}
\end{theorem}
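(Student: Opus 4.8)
The plan is to prove the contrapositive directly, which makes the argument almost immediate. Suppose $g' \in G$ satisfies $g' = \argmin_{g \in \UG{C}} V^*(g,C)$. I want to conclude that $V'(g',C) \le \textsc{UB}(C)$.

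First I would chain the two hypotheses. Since $g'$ attains the minimum, $V^*(g',C) = \min_{g \in \UG{C}} V^*(g,C)$, and by Definition~\ref{def:mintotal_v*} this equals $\MinTotal{C}$. Next, apply the estimate hypothesis at the particular pair $(g',C)$: we have $V'(g',C) \le V^*(g',C)$. Finally, apply the upper-bound hypothesis: $\MinTotal{C} \le \textsc{UB}(C)$. Stringing these together gives $V'(g',C) \le V^*(g',C) = \MinTotal{C} \le \textsc{UB}(C)$, i.e. $V'(g',C) \le \textsc{UB}(C)$, which is exactly the negation of the hypothesis $V'(g',C) > \textsc{UB}(C)$ in the theorem statement. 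Taking the contrapositive yields the claim.

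There is essentially no hard step here — the theorem is a bookkeeping consequence of the definitions, and the only thing to be careful about is that the two supplied inequalities ($\textsc{UB} \ge \MinTotal{}$ and $V' \le V^*$) hold for \emph{all} relevant arguments, so in particular at the specific $C$ and $g'$ we need. One subtlety worth a sentence in the write-up: the $\argmin$ in the statement is taken over $\UG{C}$ rather than all of $G$, but Lemma~\ref{lemma:useful_best} (equivalently the first line of Definition~\ref{def:mintotal_v*}) guarantees that restricting to useful guesses does not change the minimum value $\MinTotal{C}$, so identifying $V^*(g',C)$ with $\MinTotal{C}$ is justified. I would also note explicitly that the theorem gives a \emph{pruning} criterion: any guess whose lower bound $V'$ already exceeds a known achievable total $\textsc{UB}(C)$ can be discarded from the search for the optimal first guess, which is the practical point of the result.

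If a cleaner presentation is preferred, one can state it as a one-line inequality chain rather than a contrapositive: for the optimal $g'$, $V'(g',C) \le V^*(g',C) = \MinTotal{C} \le \textsc{UB}(C)$; contrapositively, $V'(g',C) > \textsc{UB}(C)$ forces $g'$ to be non-optimal. Either phrasing is a two- or three-line proof, so the main ``obstacle'' is purely expository — making sure the reader sees which hypothesis supplies which inequality.
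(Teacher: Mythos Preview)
Your proof is correct and essentially identical to the paper's: the paper writes the direct inequality chain $V^*(g',C) \geq V'(g',C) > \textsc{UB}(C) \geq \MinTotal{C}$ and invokes Definition~\ref{def:mintotal_v*}, which is exactly the one-line version you sketch at the end. The contrapositive framing you lead with is just a cosmetic re-ordering of the same three inequalities.
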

\begin{proof}
If $V'(g',C) > \textsc{UB}(C)$ for a particular $g' \in G$,
\begin{align*}
    V^*(g',C) \geq V'(g',C) > \textsc{UB}(C) \geq \MinTotal{C}
\end{align*}
We then use Definition~\ref{def:mintotal_v*} to complete the proof.
\end{proof}
Theorem~\ref{th:v*_estimate} has the effect that for any guess~$g'$, if ${V^*(g', S) > \textsc{UB}(S)}$, then $g'$ \textbf{cannot} be an optimal starting word.

In order to estimate $V^*$, we must first estimate $\textsc{MinTotal}$ as it is much easier to create bounds for.

\subsection{Tree Representations}\label{subsec:tree}

\mycite{ville} demonstrated representing their Mastermind strategy as a decision tree, and we may do the same with strategies in guessing games in general, as illustrated in Figure~\ref{fig:wordle_tree} for a Wordle strategy. We will call these \textbf{strategy trees}. Each node is a guess to be submitted; starting at the root node as the initial guess. The outgoing branches from a node represent the possible responses received by submitting the node's guess. If $r^*$ is a possible response, then we do \textbf{not} include that branch and instead highlight the node in green as a possible end to the game.

\begin{figure}
    \centering
    \includegraphics[width=\linewidth]{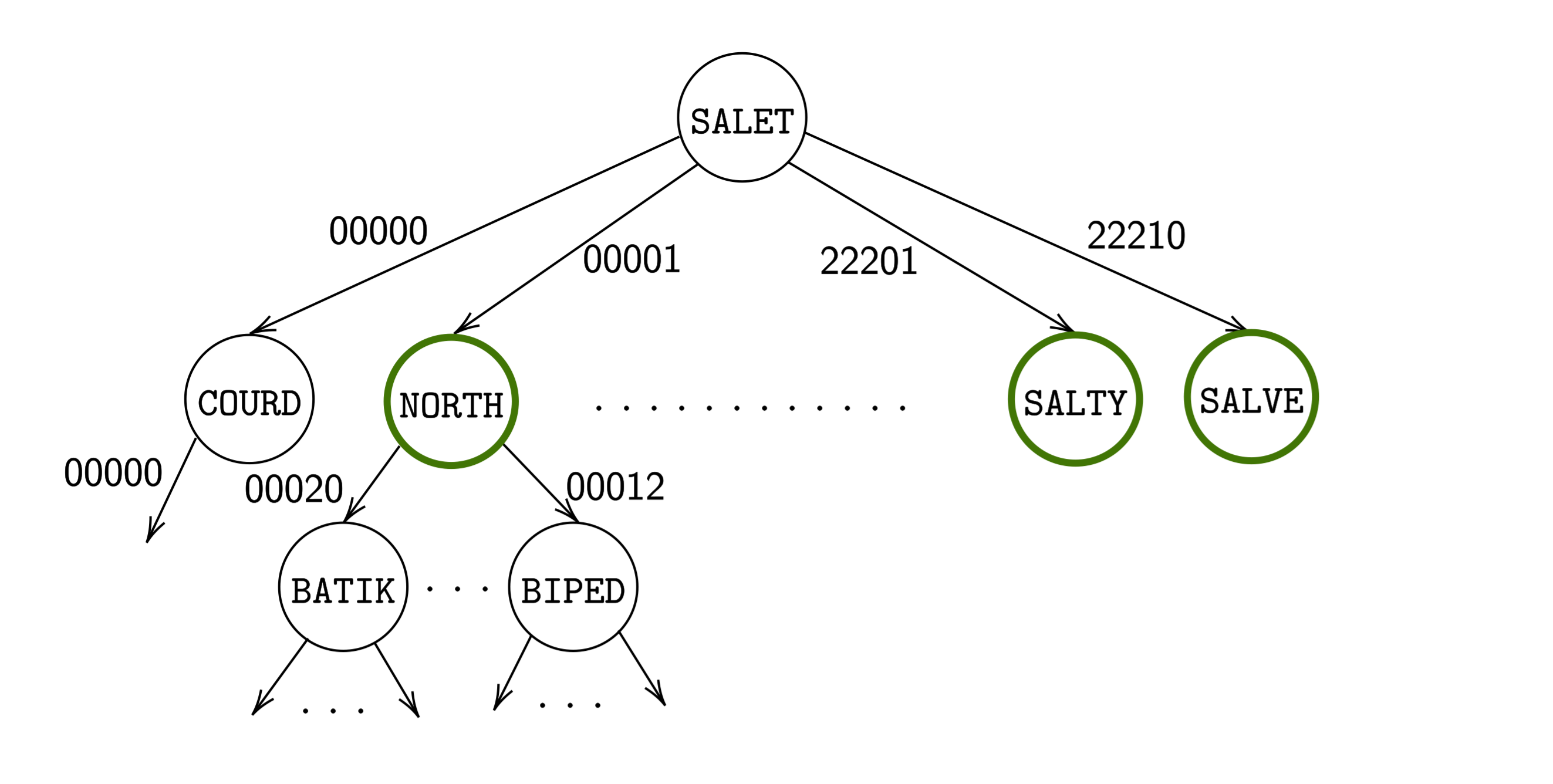}
    \caption{Part of our Wordle strategy represented as a strategy tree. Not all branches or guesses are shown.}
    \label{fig:wordle_tree}
\end{figure}

Alternatively, each node can be thought of as corresponding to a current set of candidates $C$, labelled with $\sigma(C)$.

It follows that for the tree representation of a strategy $\sigma$, the value $\textsc{TurnsNeeded}(\sigma, s)$ is represented by the depth of node labelled with $s$ \emph{as a leaf node}. Note that the same word may appear multiple times in a strategy tree, so we must follow the nodes and branches to properly compute the `correct' depth. Importantly, we assign the depth of the root node as 1, so the value $\Total{}(\sigma)$ can be then visualized as the sum of the depths of each secret. With this, we can then notice that \textsc{MinTotal} is purely dependent on the placements of the nodes corresponding to possible secrets within a tree. The natural question to ask then is, ``What is the best way to arrange the nodes corresponding to possible secrets in a strategy tree to minimize the sum of depths to each of these nodes?'', or put more generally,
\begin{center}
    ``What is the best way to arrange the $n$ nodes in a tree to minimize the sum of depths to each node?''
\end{center}
To answer this we need to prove some properties about strategy trees.

\begin{definition}\label{def:maxsplits}
For any $C \subseteq S$,
\begin{align*}
    \MaxSplits{C} = \max_{g \in G} \nSplits{g, C}
\end{align*}
\end{definition}

\begin{lemma}\label{lemma:maxsplits}
For candidate sets $C'$ and $C$, if ${C' \subseteq C}$, then ${\MaxSplits{C'} \leq \MaxSplits{C}}$.
\end{lemma}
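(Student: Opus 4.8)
The plan is to show that any guess which achieves many nonempty splits on the smaller set $C'$ can be "matched" by a guess on the larger set $C$, so that the maximum over $g \in G$ can only grow. Fix the guess $g^\star \in G$ that attains $\MaxSplits{C'}$, i.e.\ $\nSplits{g^\star, C'} = \MaxSplits{C'}$. I would then compare $\nSplits{g^\star, C'}$ with $\nSplits{g^\star, C}$ for this \emph{same} guess $g^\star$: since $\MaxSplits{C}$ is a maximum over all $g \in G$, it suffices to prove $\nSplits{g^\star, C'} \leq \nSplits{g^\star, C}$, because then $\MaxSplits{C'} = \nSplits{g^\star, C'} \leq \nSplits{g^\star, C} \leq \MaxSplits{C}$.

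So the core of the argument is monotonicity of $\nSplits{g, \cdot}$ under set inclusion, for a fixed $g$. Recall $\nSplits{g, C} = \abs{\set{r \in R : \abs{C_{g,r}} \neq 0}}$, the number of responses $r$ that are actually realized by some candidate in $C$ when guessing $g$. The key observation is that $C' \subseteq C$ implies $C'_{g,r} \subseteq C_{g,r}$ for every response $r$ (directly from Definition~\ref{def:split}, since $C'_{g,r} = \set{c \in C' : a(g,c) = r}$ and every such $c$ lies in $C$ and satisfies $a(g,c) = r$, hence lies in $C_{g,r}$). Consequently, whenever $\abs{C'_{g,r}} \neq 0$ we also have $\abs{C_{g,r}} \neq 0$, so the set of "active" responses for $C'$ is a subset of the set of active responses for $C$. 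Taking cardinalities gives $\nSplits{g, C'} \leq \nSplits{g, C}$.

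Combining the two steps completes the proof. I do not expect a real obstacle here — the statement is essentially a bookkeeping fact about how partition blocks shrink when the ground set shrinks; the only thing to be careful about is the edge case $\abs{C'} \leq 1$ or $\abs{C} \leq 1$, but this is harmless since the definitions of $\nSplits{}$ and $\MaxSplits{}$ still apply verbatim (and a singleton or empty $C'$ trivially has $\MaxSplits{C'} \leq 1$, which is at most $\MaxSplits{C}$ for any $C \supseteq C'$ as long as $C$ is nonempty, or both sides are handled directly when $C = \emptyset$).
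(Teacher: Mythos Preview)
Your proof is correct and follows essentially the same approach as the paper: both establish that for any fixed guess $g$, the inclusion $C' \subseteq C$ forces $C'_{g,r} \subseteq C_{g,r}$ (hence nonempty $C'_{g,r}$ implies nonempty $C_{g,r}$), giving $\nSplits{g,C'} \leq \nSplits{g,C}$, from which the inequality of maxima follows immediately. Your version is slightly more explicit in isolating the maximizing guess $g^\star$ and handling edge cases, but the underlying argument is identical.
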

\begin{proof}
If ${C' \subseteq C}$, for any guess ${g \in G}$ we must have $\nSplits{g, C'} \leq \nSplits{g, C}$. This is because if $C'_{g,r}$ is non-empty for some response $r$, then $C_{g,r}$ is non-empty. This implies the desired result.
\end{proof}

\begin{theorem}\label{th:max_splits_tree}
In any tree made to resolve a candidate set $C$, all nodes have at most $\textsc{MaxSplits}(C)$ children.
\end{theorem}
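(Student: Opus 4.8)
The plan is to proceed by induction on the structure of the strategy tree, using Lemma~\ref{lemma:maxsplits} to carry the bound down to subtrees. First I would fix a tree $T$ that resolves a candidate set $C$, and let the root be labelled by some guess $g = \sigma(C)$. By the very definition of a strategy tree, the outgoing branches from the root correspond exactly to the distinct non-affirmative responses $r$ for which $C_{g,r}$ is non-empty, together with (possibly) the affirmative response $r^*$, which contributes no branch but only the green highlighting. Hence the number of children of the root is at most $\nSplits{g,C} \leq \MaxSplits{C}$ by Definition~\ref{def:maxsplits}. This handles the root.

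The key step is then the inductive descent. Each child of the root is itself the root of a subtree $T_r$ that resolves the candidate set $C_{g,r}$ for the corresponding response $r$. Since $C_{g,r} \subseteq C$, Lemma~\ref{lemma:maxsplits} gives $\MaxSplits{C_{g,r}} \leq \MaxSplits{C}$, so any bound of the form ``at most $\MaxSplits{C_{g,r}}$ children'' is also a bound of the form ``at most $\MaxSplits{C}$ children''. By the induction hypothesis applied to $T_r$, every node in $T_r$ has at most $\MaxSplits{C_{g,r}} \leq \MaxSplits{C}$ children, and combining this with the root bound establishes the claim for all of $T$. The base case is a tree resolving a candidate set with $|C| \leq 1$, which is a single node with no children, so the bound holds trivially.

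The main obstacle — really the only subtlety — is making precise the claim that the branching at any node of the tree is governed by $\nSplits{}$ of the guess at that node with respect to the candidate set that node represents. This rests on the observation (made just after the tree representation is introduced) that each node corresponds to a current candidate set $C$ and is labelled $\sigma(C)$, and that its children are precisely the non-empty splits $C_{g,r}$ for $r \neq r^*$; one must also recall that a branch for $r^*$ is deliberately omitted, so the child count can only be \emph{smaller} than $\nSplits{g,C}$, never larger. Once this correspondence is nailed down, the induction is routine. One should also note that the same word may label several nodes (as remarked in the text), but this is harmless: the bound is stated per node and depends only on the candidate set at that node, not on global uniqueness of labels.
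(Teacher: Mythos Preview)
Your proof is correct and follows essentially the same approach as the paper: bound the root's branching by $\MaxSplits{C}$, observe that each child corresponds to a candidate set $C_{g,r}\subseteq C$, invoke Lemma~\ref{lemma:maxsplits} to push the bound down, and then recurse. The paper phrases the recursion informally as ``the same logic can be cascaded down each branch'' whereas you frame it as structural induction with an explicit base case, but the argument is the same.
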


\begin{proof}
The value $\textsc{MaxSplits}(C)$ is the highest number of branches the root node can have. This is true even after noting that $r^*$ is never assigned a corresponding branch. Recall moreover that each child node also corresponds to a candidate set $C' \subset C$. The number of children that the direct child nodes of the root node can have is upper bounded by $\textsc{MaxSplits}(C')$, but by Lemma~\ref{lemma:maxsplits}, this value is upper bounded by $\textsc{MaxSplits}(C)$. The same logic can be cascaded down each branch of the tree to show that each node has at most $\textsc{MaxSplits}(C)$ children.
\end{proof}

\begin{definition}\label{def:bound}
For integers $n \geq 0$ and $b \geq 1$, $\Bound{n,b}$ is the minimum sum of depths of each node in a tree with $n$ nodes, and each node having at most $b$ children. We call such a tree a $b$-tree.
\end{definition}

\begin{theorem}\label{th:bound}
For integers $n > 0$ and $b > 1$,
\begin{align*}
    &\Bound{n, b} = \sum_{i=1}^k ib^{i-1} + (k + 1)\x \left(n - \frac{b^k-1}{b-1}\right) \\
    &\text{ where } k = \floor{\log_b(n(b-1)+1)} 
\end{align*}
For $b = 1$, $\textsc{Bound}(n, 1) = \frac{n(n+1)}{2}$.\\
For $n = 0$, $\textsc{Bound}(0, b) = 0$.
\end{theorem}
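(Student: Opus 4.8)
The plan is to build the optimal $b$-tree greedily and then turn that construction into the closed-form count. First I would argue that a depth-minimizing tree on $n$ nodes with branching at most $b$ must be \emph{level-saturated}: fill level $1$ (the root, $1$ node), then level $2$ (at most $b$ nodes), then level $3$ (at most $b^2$ nodes), and so on, placing all remaining nodes as shallowly as possible. The exchange argument is routine: if some node sits at depth $d+1$ while a vacancy exists at depth $d$ (meaning some node at depth $\le d-1$ has fewer than $b$ children), moving it up strictly decreases the sum of depths, so no optimal tree has such a vacancy. Hence in the optimum, every level $i$ with $i \le k$ is completely full with $b^{i-1}$ nodes, for the largest $k$ such that the first $k$ full levels do not exceed $n$, and the leftover nodes all lie at level $k+1$.

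Next I would pin down $k$ and the leftover count by arithmetic. A full $b$-ary tree of depth $k$ has $1 + b + \cdots + b^{k-1} = \frac{b^k - 1}{b-1}$ nodes (using $b > 1$), so $k$ is the largest integer with $\frac{b^k-1}{b-1} \le n$, i.e.\ $b^k \le n(b-1)+1$, which is exactly $k = \floor{\log_b(n(b-1)+1)}$. The number of nodes not accounted for by the first $k$ full levels is $n - \frac{b^k-1}{b-1}$, and these all go at depth $k+1$, contributing $(k+1)\bigl(n - \frac{b^k-1}{b-1}\bigr)$ to the sum. The first $k$ levels contribute $\sum_{i=1}^{k} i \cdot b^{i-1}$, since level $i$ holds $b^{i-1}$ nodes each at depth $i$. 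Adding these gives the stated formula. I should also check the leftover count is genuinely less than $b^k$ (the capacity of level $k+1$), which follows from maximality of $k$; if it equalled or exceeded $b^k$ we could extend to a $(k+1)$-st full level, so the formula is consistent and the tree is well-defined.

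Finally I would dispatch the two degenerate cases. For $b = 1$ the tree is forced to be a path, so node $i$ sits at depth $i$ and the sum is $1 + 2 + \cdots + n = \frac{n(n+1)}{2}$; here $\log_b$ is undefined, which is why this case is stated separately. For $n = 0$ the empty tree has sum of depths $0$ trivially. It is worth remarking that for $b>1$ the general formula does specialize correctly at $n=0$: then $n(b-1)+1 = 1$, so $k = \floor{\log_b 1} = 0$, the sum $\sum_{i=1}^{0}$ is empty, and $(k+1)(n - \frac{b^0-1}{b-1}) = 1 \cdot (0 - 0) = 0$.

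I expect the main obstacle to be making the exchange/greedy argument fully rigorous rather than hand-wavy: one must show not just that a single improving move exists when the tree is not level-saturated, but that the saturated shape is the \emph{unique} minimizer up to relabeling, and in particular that no cleverer non-saturated arrangement (e.g.\ one that keeps some upper level short to gain elsewhere) can do better. The clean way is to phrase it as: for any tree $T$ on $n$ nodes with branching $\le b$, and for each depth $d$, the number of nodes at depth $\le d$ is at most $\frac{b^d-1}{b-1}$ (by Theorem~\ref{th:max_splits_tree}-style cascading), so the partial sums of the "nodes-by-depth" profile are dominated by those of the saturated profile; since $\Total$ is a sum of depths, a profile that is "as shallow as possible at every prefix" minimizes it (an Abel-summation / majorization argument). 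That reduces the whole theorem to the elementary identity computing the sum for the saturated profile.
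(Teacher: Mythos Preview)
Your proposal is correct and follows essentially the same approach as the paper: fill the tree in level order, identify $k$ as the deepest fully filled level via the geometric sum $\frac{b^k-1}{b-1}\le n$, and count the leftover at depth $k+1$. The paper's proof simply asserts that level-order filling is ``clearly'' optimal, whereas you supply an explicit exchange argument and even sketch the majorization/profile-domination formulation; so your version is, if anything, more rigorous on the one point the paper leaves informal.
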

\begin{proof}
Note that $\textsc{Bound}(0,1) = 0$ by either of the last two cases.

This is trivial when $n = 0$ or $b = 1$. In the case where $n > 0$ and $b > 1$, there are multiple ways to arrange $n$ nodes in a $b$-tree. We are interested in minimizing the sum of depths to each node; clearly we must fill in level-order, noting that there are at most $b^{i-1}$ nodes of depth $i$. Doing this will show that $k$ is the depth of the last completely filled layer. The last term in the definition accounts for the `leftover' nodes at depth $k+1$.
\end{proof}

The restriction on the number of children suggests we use $\textsc{Bound}$ in creating lower bounds for $\textsc{MinTotal}$.
\begin{definition}\label{def:lb1_2}
\begin{align*}
    LB_1(C) &= \Bound{|C|, \MaxSplits{S}}\\
    LB_2(C) &= \Bound{|C|, \MaxSplits{C}}
\end{align*}
\end{definition}
In turn, we use this to recursively build bounds for $V^*$ and $\textsc{MinTotal}$, taking inspiration from Definition~\ref{def:mintotal_v*}.
\begin{definition}\label{def:build}
For integers $i \geq 1$
\begin{align*}
    LB_{i+2}(C) &= \min_{g \in \UG{C}} V_i(g, C)\\
    V_i(g,C) &= \abs{C} + \sum_{r \in R\backslash\set{r^*}} LB_i(C_{g,r})
\end{align*}
\end{definition}
It remains to prove that $LB_i$ and $V_i$ are in fact lower bounds to $\textsc{MinTotal}$ and $V^*$ respectively.
\subsection{Key Theorems and Proofs}
We now use the work of Sections~\ref{subsec:ug},~\ref{subsec:setup} and \ref{subsec:tree} to build the key theorems of this paper.
\begin{lemma}\label{lemma:lb1_2}
    For any $C \subseteq S$, $LB_1(C) \leq LB_2(C)$.
\end{lemma}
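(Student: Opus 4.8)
The plan is to reduce Lemma~\ref{lemma:lb1_2} to a monotonicity property of $\textsc{Bound}$ in its second argument. By Definition~\ref{def:lb1_2}, we have $LB_1(C) = \Bound{|C|, \MaxSplits{S}}$ and $LB_2(C) = \Bound{|C|, \MaxSplits{C}}$, so the two differ only in the branching parameter $b$. Since $C \subseteq S$, Lemma~\ref{lemma:maxsplits} gives $\MaxSplits{C} \leq \MaxSplits{S}$. So it suffices to show that $\Bound{n, b}$ is non-increasing in $b$ for fixed $n$: a tree in which every node may have up to $b$ children can only be ``more spread out'' (hence have smaller sum of depths) when $b$ is larger, because any $b$-tree is also a $b'$-tree for $b' \geq b$.

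First I would handle the degenerate cases so they don't clutter the argument: if $|C| = 0$ both sides are $0$ by the last case of Theorem~\ref{th:bound}; if $|C| = 1$ both sides are $1$. Assuming $|C| \geq 1$, I would also note $\MaxSplits{S} \geq \MaxSplits{C} \geq 1$ always. Then the core claim is: for integers $n \geq 0$ and $1 \leq b \leq b'$, $\Bound{n, b'} \leq \Bound{n, b}$. The cleanest way to see this is combinatorial rather than by wrestling with the closed form of Theorem~\ref{th:bound}: recall from Definition~\ref{def:bound} that $\Bound{n, b}$ is the minimum, over all trees on $n$ nodes in which every node has at most $b$ children, of the sum of node depths. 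Any tree witnessing $\Bound{n, b}$ is a tree in which every node has at most $b \leq b'$ children, hence is admissible in the minimization defining $\Bound{n, b'}$; therefore $\Bound{n, b'} \leq \Bound{n, b}$. Chaining, $LB_2(C) = \Bound{|C|, \MaxSplits{C}} \geq \Bound{|C|, \MaxSplits{S}} = LB_1(C)$, which is the claim.

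I expect the only real subtlety to be making the combinatorial monotonicity argument airtight given how Theorem~\ref{th:bound} is phrased. Definition~\ref{def:bound} already defines $\Bound{n,b}$ as a minimum over $b$-trees, and ``every node has at most $b$ children'' is visibly a weaker constraint than ``at most $b'$ children'' when $b \leq b'$, so the feasible set only grows and the minimum only drops — this is immediate once stated. The alternative, plugging the explicit formula from Theorem~\ref{th:bound} into an inequality and verifying it by analysis of the floor term $k = \floor{\log_b(n(b-1)+1)}$, is technically possible but far messier, so I would avoid it. One should just double-check that the $b=1$ boundary case is consistent with the combinatorial picture (a $1$-tree is a path, forcing depths $1, 2, \dots, n$ and sum $\tfrac{n(n+1)}{2}$, which is indeed the largest possible), so the inequality $\Bound{n,b'} \le \Bound{n,1}$ also holds for $b' > 1$; thus no case is lost.
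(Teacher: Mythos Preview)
Your proposal is correct and follows essentially the same route as the paper: reduce to monotonicity of $\Bound{n,b}$ in $b$ via Definition~\ref{def:bound}, and invoke Lemma~\ref{lemma:maxsplits} to compare $\MaxSplits{C}$ with $\MaxSplits{S}$. Your feasible-set inclusion phrasing (``every $b$-tree is a $b'$-tree for $b'\ge b$'') is in fact a crisper justification of the monotonicity step than the paper's verbal one, and your explicit handling of the degenerate cases $|C|\in\{0,1\}$ and of $\MaxSplits{C}\ge 1$ is a welcome addition.
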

\begin{proof}
Note that $\Bound{n,b}$ increases as $b$ decreases for a fixed $n$. Recall the definition of $\Bound{n,b}$. Clearly decreasing $b$ means each node's depth can only increase, so the overall sum of depths for each node must increase.

This fact combined with Lemma~\ref{lemma:maxsplits} implies the desired result.
\end{proof}
\begin{lemma}\label{lemma:lb2_mintotal}
    For any $C \subseteq S$, $LB_2(C) \leq \MinTotal{C}$.
\end{lemma}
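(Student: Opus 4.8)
The plan is to argue directly from the strategy-tree picture of Section~\ref{subsec:tree}. First I would dispose of $C=\emptyset$, where both sides equal $0$, and set $b=\MaxSplits{C}$ for $C\neq\emptyset$. The key fact to invoke is that $\MinTotal{C}$ is realized as the minimum, over all strategy trees $T$ resolving $C$, of $\sum_{s\in C}\mathrm{depth}_T(s)$, where $\mathrm{depth}_T(s)$ (root $=1$) is the depth of the unique node at which $s$ is identified; these $|C|$ nodes are pairwise distinct because the identifying node of $s$ is labelled with the guess $s$. I would then fix one such $T$ and use Theorem~\ref{th:max_splits_tree} to conclude that every node of $T$ has at most $b$ children, i.e.\ $T$ is a $b$-tree in the sense of Definition~\ref{def:bound}.

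With that in place, the lemma reduces to a purely combinatorial claim about $b$-trees: in any $b$-tree, the sum of the depths of any $|C|$ distinct nodes is at least $\Bound{|C|,b}$. Granting the claim, applying it to the $|C|$ identifying nodes of $T$ gives $\sum_{s\in C}\mathrm{depth}_T(s)\geq\Bound{|C|,b}=LB_2(C)$, and taking the minimum over all resolving trees $T$ yields $\MinTotal{C}\geq LB_2(C)$.

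To prove the claim I would use the level-order counting bound: a $b$-tree has at most $1+b+\cdots+b^{\,i-1}=\frac{b^i-1}{b-1}$ nodes of depth at most $i$ (and at most $i$ such nodes when $b=1$). Hence if $d_1\leq\cdots\leq d_{|C|}$ are the sorted depths of the chosen nodes, then for each $i$, $\#\{j:d_j\leq i\}$ is no larger than the number of depth-$\leq i$ nodes in the $b$-tree on $|C|$ nodes that is filled level by level; the standard pointwise-domination argument then gives $d_j\geq e_j$ for all $j$, where $e_1\leq\cdots\leq e_{|C|}$ are the depths occurring in that level-order filling. Summing over $j$ gives $\sum_j d_j\geq\sum_j e_j$, and $\sum_j e_j$ is exactly $\Bound{|C|,b}$ — indeed the level-order filling is the minimizer underlying Definition~\ref{def:bound}, and Theorem~\ref{th:bound} is its closed form.

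The step I expect to be the main obstacle is this combinatorial claim, precisely because the $|C|$ identifying nodes need not form a subtree of $T$ (a secret may be identified at an ancestor of the node identifying another secret) and $T$ may contain extra, non-identifying internal nodes; the level-order count finesses both points by only bounding how many nodes can be shallow. As an alternative that avoids trees entirely, one could instead do strong induction on $|C|$: expand $\MinTotal{C}=|C|+\sum_{r\neq r^*}\MinTotal{C_{g^*,r}}$ using Lemma~\ref{lemma:useful_best} for the optimal useful $g^*$, apply the induction hypothesis to each strictly smaller split $C_{g^*,r}$, replace $\MaxSplits{C_{g^*,r}}$ by $b$ via Lemma~\ref{lemma:maxsplits} together with the monotonicity of $\Bound{\cdot,b}$ in $b$ established in the proof of Lemma~\ref{lemma:lb1_2}, and finally collapse the sum using the recursion $\Bound{n,b}=n+\min\bigl\{\sum_{i=1}^{b}\Bound{n_i,b}:\sum_i n_i=n-1\bigr\}$; the case $g^*\notin C$ needs one extra line, since then $\sum_r|C_{g^*,r}|=|C|$ rather than $|C|-1$, which is absorbed by monotonicity of $\Bound{n,b}$ in $n$.
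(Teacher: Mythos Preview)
Your main argument is correct and follows essentially the same route as the paper: interpret $\MinTotal{C}$ as a sum of depths in an optimal strategy tree, invoke Theorem~\ref{th:max_splits_tree} to see that this tree is a $\MaxSplits{C}$-tree, and conclude via Definition~\ref{def:bound}. The paper's proof is terse and simply asserts that ``the sum of depths to each node is lower bounded by $LB_2(C)$''; you go further and actually justify the underlying combinatorial claim (that any $|C|$ distinct nodes in a $b$-tree have depth-sum at least $\Bound{|C|,b}$) via the level-order counting and pointwise-domination argument, which is exactly the step the paper leaves implicit. Your inductive alternative is also sound and is close in spirit to the machinery the paper develops later (Lemma~\ref{lemma:A} in the appendix gives precisely the recursion $\Bound{n,b}=n+\min\sum_i\Bound{a_i,b}$ you would need), but it is not the route the paper takes for this lemma.
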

\begin{proof}
\textsc{MinTotal} is intended to represent the best way to arrange the candidates of $C$ in any valid strategy tree in order to minimize the sum of depths to each candidate. We know that in this `ideal' strategy tree, there must be at least $|C|$ nodes (one for each candidate), and that by Theorem~\ref{th:max_splits_tree} this tree is a $\textsc{MaxSplits}(C)$-tree. It follows by Definition~\ref{def:lb1_2} and Definition~\ref{def:bound} that the sum of depths to each node is lower bounded by $LB_2(C)$.
\end{proof}
\begin{lemma}\label{lemma:lbi_j_v_i_j}
If $LB_i(C) \leq LB_j(C)$ for any $C \subseteq S$, then $V_i(C) \leq V_j(C)$ for any guess ${g \in G}$ and $C \subseteq S$.
\end{lemma}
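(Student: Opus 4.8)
The plan is to prove this directly from Definition~\ref{def:build}, with no induction required. Recall that $V_i(g,C) = \abs{C} + \sum_{r \in R\backslash\set{r^*}} LB_i(C_{g,r})$, and that $V_j(g,C)$ is given by the same formula with $LB_i$ replaced by $LB_j$. I would fix an arbitrary guess $g \in G$ and candidate set $C \subseteq S$, and then compare the two sums term by term.

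The key step is to observe that each split $C_{g,r} = \set{c \in C : a(g,c) = r}$ satisfies $C_{g,r} \subseteq C \subseteq S$, so the hypothesis $LB_i(\cdot) \leq LB_j(\cdot)$ — which is assumed to hold for \emph{every} subset of $S$ — applies to each $C_{g,r}$, giving $LB_i(C_{g,r}) \leq LB_j(C_{g,r})$. Since $R$ is finite (Definition~\ref{def:guessing_game}), the sum over $r \in R\backslash\set{r^*}$ has finitely many terms, and summing these termwise inequalities and then adding $\abs{C}$ to both sides yields $V_i(g,C) \leq V_j(g,C)$. As $g$ and $C$ were arbitrary, this finishes the argument. (Note the statement should read $V_i(g,C) \leq V_j(g,C)$; the guess argument is suppressed by a small typo.)

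I do not expect any genuine obstacle here: the only points needing (minor) care are confirming that each $C_{g,r}$ is indeed a subset of $S$ — immediate from Definition~\ref{def:split} — so that the hypothesis is actually applicable, and that the index set of the sum is finite so that order is preserved under summation. It is worth remarking that this lemma is precisely the monotonicity step which, combined with Lemmas~\ref{lemma:lb1_2} and~\ref{lemma:lb2_mintotal}, will let one induct on $i$ to conclude that $LB_i(C) \leq \textsc{MinTotal}(C)$ and $V_i(g,C) \leq V^*(g,C)$ for all $i \geq 1$.
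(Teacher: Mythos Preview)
Your proposal is correct and follows essentially the same approach as the paper: apply the hypothesis to each split $C_{g,r}\subseteq S$, then sum termwise and add $\abs{C}$. The paper's proof is just a terser version of exactly this argument, and your observation about the typo in the statement is accurate.
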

\begin{proof}
Follows from construction in Definition~\ref{def:build}.
\end{proof}
\begin{proposition}\label{p:vi_j_lb_i+2_j+2}
If $V_i(C) \leq V_j(C)$ for any guess ${g \in G}$ and $C \subseteq S$, then $LB_{i+2}(C) \leq LB_{j+2}(C)$ for any $C \subseteq S$.
\end{proposition}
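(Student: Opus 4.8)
The plan is to unwind the definitions and observe that taking a minimum is monotone with respect to the pointwise order on the functions being minimized. By Definition~\ref{def:build}, $LB_{i+2}(C) = \min_{g \in \UG{C}} V_i(g,C)$ and $LB_{j+2}(C) = \min_{g \in \UG{C}} V_j(g,C)$, and both minima range over the same (non-empty, since $\UG{C} \supseteq$ a singleton when $|C|\le 1$ and is non-empty otherwise) index set $\UG{C}$.

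First I would fix an arbitrary $C \subseteq S$ and let $g^\star \in \UG{C}$ be a guess attaining the minimum in $LB_{j+2}(C)$, i.e.\ $LB_{j+2}(C) = V_j(g^\star, C)$. Then I would chain the inequalities: $LB_{i+2}(C) = \min_{g \in \UG{C}} V_i(g,C) \leq V_i(g^\star, C) \leq V_j(g^\star, C) = LB_{j+2}(C)$, where the first step is because $g^\star \in \UG{C}$ so the minimum is at most the value at $g^\star$, and the middle step is the hypothesis $V_i(g,C)\le V_j(g,C)$ instantiated at $g=g^\star$. Since $C$ was arbitrary, this gives $LB_{i+2}(C) \leq LB_{j+2}(C)$ for all $C \subseteq S$.

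There is essentially no obstacle here: the statement is an immediate consequence of Definition~\ref{def:build} together with the elementary fact that if $f \leq h$ pointwise on a common domain then $\inf f \leq \inf h$. The only point worth a sentence of care is confirming that the index set $\UG{C}$ is the same on both sides and is non-empty, so that the minima are well-defined; both follow directly from Definition~\ref{def:ug}. (This proposition, paired with Lemmas~\ref{lemma:lb1_2}, \ref{lemma:lb2_mintotal} and~\ref{lemma:lbi_j_v_i_j}, then feeds an induction showing each $LB_i$ lower-bounds $\textsc{MinTotal}$ and each $V_i$ lower-bounds $V^*$, which is presumably the next result.)
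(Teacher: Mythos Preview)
Your proof is correct and is essentially identical to the paper's: both pick a minimizer $g'$ (your $g^\star$) of $V_j(\cdot,C)$ over $\UG{C}$ and chain $LB_{i+2}(C)\le V_i(g',C)\le V_j(g',C)=LB_{j+2}(C)$. The only difference is that you add a remark about $\UG{C}$ being non-empty, which the paper omits.
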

\begin{proof}
Follows from construction in Definition~\ref{def:build}.
\end{proof}
\begin{corollary}\label{co:lbi_j_i+2_j+2}
If ${LB_i(C) \leq LB_j(C)}$ for any ${C \subseteq S}$, then ${LB_{i+2}(C) \leq LB_{j+2}(C)}$ for any ${C \subseteq S}$.
\end{corollary}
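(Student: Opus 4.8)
The plan is to observe that this corollary is simply the composition of the two immediately preceding results, Lemma~\ref{lemma:lbi_j_v_i_j} and Proposition~\ref{p:vi_j_lb_i+2_j+2}, which together form a chain: the first converts a pointwise inequality between $LB_i$ and $LB_j$ into a pointwise inequality between $V_i$ and $V_j$, and the second converts that back into a pointwise inequality between $LB_{i+2}$ and $LB_{j+2}$. So the entire argument is a two-step implication with no new content.

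Concretely, I would start from the hypothesis that $LB_i(C) \le LB_j(C)$ holds for every $C \subseteq S$. First I would invoke Lemma~\ref{lemma:lbi_j_v_i_j} verbatim to obtain $V_i(g,C) \le V_j(g,C)$ for every guess $g \in G$ and every $C \subseteq S$. Then I would feed this directly into Proposition~\ref{p:vi_j_lb_i+2_j+2}, whose hypothesis is exactly that statement, to conclude $LB_{i+2}(C) \le LB_{j+2}(C)$ for every $C \subseteq S$, which is the claim. The only thing worth double-checking is that the quantifiers match up — both the lemma and the proposition quantify over all $C \subseteq S$ (and, in the intermediate step, over all $g \in G$) — so there is no gap when the two are composed.

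There is essentially no obstacle here: the corollary is stated precisely so that the recursive construction of Definition~\ref{def:build} can be iterated, and the real mathematical work has already been done in establishing Lemmas~\ref{lemma:lb1_2} and~\ref{lemma:lb2_mintotal} (the base cases) together with the two monotonicity steps just cited. If anything, the write-up should be a single sentence. I would phrase the proof as: ``Immediate from Lemma~\ref{lemma:lbi_j_v_i_j} followed by Proposition~\ref{p:vi_j_lb_i+2_j+2}.'' The point of isolating it as a corollary is that, combined with Lemmas~\ref{lemma:lb1_2} and~\ref{lemma:lb2_mintotal}, it lets one conclude by induction that $LB_i(C) \le \MinTotal{C}$ (and hence, via Lemma~\ref{lemma:lbi_j_v_i_j}, that $V_i(g,C) \le V^*(g,C)$) for all odd and even $i$ alike, which is exactly the hypothesis needed to apply Theorem~\ref{th:v*_estimate}.
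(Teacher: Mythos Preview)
Your proposal is correct and matches the paper's own proof exactly: the paper also simply chains Lemma~\ref{lemma:lbi_j_v_i_j} and Proposition~\ref{p:vi_j_lb_i+2_j+2}, writing the two implications out in one display. There is nothing to add.
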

\begin{proof}
Follows from Lemma~\ref{lemma:lbi_j_v_i_j} and Proposition~\ref{p:vi_j_lb_i+2_j+2}.
\end{proof}
\begin{theorem}\label{th:lb_adjacent}
For any integer ${n \geq 1}$, we have that ${LB_{2n-1}(C) \leq LB_{2n}(C) \leq \MinTotal{C}}$ for any $C \subseteq S$.
\end{theorem}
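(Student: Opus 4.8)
The plan is to prove the chain $LB_{2n-1}(C) \le LB_{2n}(C) \le \MinTotal{C}$ by induction on $n$, carrying both inequalities through the induction simultaneously since each of them feeds the next step in a different way.

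\textbf{Base case} ($n=1$). The inequality $LB_1(C) \le LB_2(C)$ is exactly Lemma~\ref{lemma:lb1_2}, and $LB_2(C) \le \MinTotal{C}$ is exactly Lemma~\ref{lemma:lb2_mintotal}, so nothing new is needed here.

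\textbf{Inductive step.} Assume $LB_{2n-1}(C) \le LB_{2n}(C) \le \MinTotal{C}$ for every $C \subseteq S$; I must derive $LB_{2n+1}(C) \le LB_{2n+2}(C) \le \MinTotal{C}$. For the first inequality, feed $LB_{2n-1}(C) \le LB_{2n}(C)$ into Corollary~\ref{co:lbi_j_i+2_j+2} (with $i = 2n-1$, $j = 2n$), which immediately gives $LB_{2n+1}(C) \le LB_{2n+2}(C)$. For the second inequality, I would compare $LB_{2n+2}$ against $\MinTotal$ term by term using their parallel recursive definitions: by Definition~\ref{def:build}, $LB_{2n+2}(C) = \min_{g \in \UG{C}}(|C| + \sum_{r \ne r^*} LB_{2n}(C_{g,r}))$, while by Definition~\ref{def:mintotal_v*} together with Lemma~\ref{lemma:useful_best}, $\MinTotal{C} = \min_{g \in \UG{C}}(|C| + \sum_{r \ne r^*} \MinTotal{C_{g,r}})$. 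The inductive hypothesis $LB_{2n}(C_{g,r}) \le \MinTotal{C_{g,r}}$ applied to each split makes every summand on the left at most the corresponding summand on the right, for every fixed $g \in \UG{C}$; taking the minimum over $g$ preserves the inequality, yielding $LB_{2n+2}(C) \le \MinTotal{C}$.

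I expect the argument to be essentially bookkeeping, with the only point requiring care being that \emph{both} halves of the inductive hypothesis are genuinely used: the odd--even inequality propagates purely through Corollary~\ref{co:lbi_j_i+2_j+2} (hence ultimately through Lemma~\ref{lemma:lbi_j_v_i_j} and Proposition~\ref{p:vi_j_lb_i+2_j+2}), whereas the bound against $\MinTotal$ needs the direct unwinding above, because $\MinTotal$ is not itself one of the $LB_i$ and so the machinery of Definition~\ref{def:build} cannot be invoked verbatim. A minor subtlety worth one sentence is the treatment of empty splits $C_{g,r} = \emptyset$, where all quantities vanish by the base conventions in Definitions~\ref{def:bound} and~\ref{def:mintotal_v*}, so the term-by-term comparison still goes through unchanged.
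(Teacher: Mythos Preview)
Your proposal is correct and follows essentially the same approach as the paper: both prove the chain by induction on $n$, using Lemmas~\ref{lemma:lb1_2} and~\ref{lemma:lb2_mintotal} for the base case, Corollary~\ref{co:lbi_j_i+2_j+2} to propagate the odd--even inequality, and a termwise comparison of the recursive formulas for $LB_{i+2}$ and $\MinTotal$ to propagate the upper bound. The only cosmetic difference is that the paper first isolates the implication ``$LB_i \le \MinTotal$ for all $C$ $\Rightarrow$ $LB_{i+2} \le \MinTotal$ for all $C$'' as a standalone observation (phrased as ``similar to Corollary~\ref{co:lbi_j_i+2_j+2}''), whereas you unwind it inline within the inductive step; the content is identical.
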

\begin{proof}
We may follow a similar proof to Corollary~\ref{co:lbi_j_i+2_j+2} to show that if $LB_i(C) \leq \MinTotal{C}$ for any $C \subseteq S$, then $LB_{i+2}(C) \leq \MinTotal{C}$ for any $C \subseteq S$.

The desired result follows combining this with Lemmas~\ref{lemma:lb1_2} and~\ref{lemma:lb2_mintotal} along with Corollary~\ref{co:lbi_j_i+2_j+2}.
\end{proof}

\begin{theorem}\label{th:lb_same}
For any integer ${n \geq 1}$, we have that ${LB_{n}(C) \leq LB_{n+2}(C) \leq \MinTotal{C}}$ for any ${C \subseteq S}$.
\end{theorem}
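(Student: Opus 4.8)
The plan is to treat the two inequalities separately. The bound $LB_{n+2}(C)\le\MinTotal{C}$ is essentially free: by Theorem~\ref{th:lb_adjacent} every index $m\ge 1$ satisfies $LB_m(C)\le\MinTotal{C}$ — directly if $m$ is even, and via $LB_{2k-1}(C)\le LB_{2k}(C)\le\MinTotal{C}$ if $m=2k-1$ — and here $n+2\ge 3$. So the real content of the theorem is the monotonicity statement $LB_n(C)\le LB_{n+2}(C)$.

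For that, I would first observe that Definition~\ref{def:build} exhibits $LB_{i+2}$ as the image of $LB_i$ under the operator $f\mapsto\bigl(C\mapsto\min_{g\in\UG{C}}[\,|C|+\sum_{r\in R\setminus\{r^*\}}f(C_{g,r})\,]\bigr)$, and Corollary~\ref{co:lbi_j_i+2_j+2} is precisely the statement that this operator is monotone. Consequently, once the two ``seed'' inequalities $LB_1(C)\le LB_3(C)$ and $LB_2(C)\le LB_4(C)$ are known for all $C\subseteq S$, a two-step induction applying Corollary~\ref{co:lbi_j_i+2_j+2} propagates them to $LB_{2k-1}(C)\le LB_{2k+1}(C)$ and $LB_{2k}(C)\le LB_{2k+2}(C)$ for every $k\ge 1$, which is exactly the claim.

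Everything therefore hinges on the two base cases, and both reduce to a single combinatorial fact: writing $b$ for the branching bound that $LB_i$ uses at the top level ($b=\MaxSplits{S}$ when $i=1$, $b=\MaxSplits{C}$ when $i=2$), for every useful $g\in\UG{C}$ with $|C|\ge 1$ one has $\Bound{|C|,b}\le V_i(g,C)$. I would prove this by building an explicit tree that witnesses the right-hand side: place $g$ at the root and attach, for each $r\ne r^*$ with $C_{g,r}\ne\emptyset$, an optimal tree realising $LB_i(C_{g,r})$ as a child subtree. By Lemma~\ref{lemma:maxsplits} (used for the root count via $\nSplits{g,C}\le\MaxSplits{C}\le b$, and for the subtrees' branching via $C_{g,r}\subseteq C$ resp.\ $C\subseteq S$) every node of this construction has at most $b$ children, so it is a legitimate $b$-tree. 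A routine depth-sum computation — each grafted subtree raising all of its nodes' depths by exactly one — shows the construction has exactly $|C|$ nodes and depth sum $V_i(g,C)$ when $g\in C$, and $|C|+1$ nodes and depth sum $V_i(g,C)+1$ when $g\notin C$. In the first case $\Bound{|C|,b}\le V_i(g,C)$ is immediate; in the second, $\Bound{|C|+1,b}\le V_i(g,C)+1$ combined with the strict growth $\Bound{m-1,b}\le\Bound{m,b}-1$ (visible from the formula in Theorem~\ref{th:bound}, or by deleting a leaf) again gives $\Bound{|C|,b}\le V_i(g,C)$. Minimising over $g\in\UG{C}$ then yields $LB_i(C)=\Bound{|C|,b}\le LB_{i+2}(C)$; the degenerate cases $|C|\in\{0,1\}$, where both sides equal $0$ resp.\ $1$, are checked by hand.

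The step I expect to be the real obstacle is precisely this $g\notin C$ sub-case of the base cases: the natural witnessing tree is one node too large, so the argument genuinely needs the \emph{strict} monotonicity of $\Bound{\cdot,b}$ in its first argument rather than the weak monotonicity already used elsewhere, and the depth-sum bookkeeping (verifying that the grafting raises depths by exactly one and that the node counts come out as claimed) must be done carefully. Once the two seeds are established, the remainder is a mechanical application of the monotonicity results already proved.
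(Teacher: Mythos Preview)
Your proposal is correct and follows essentially the same architecture as the paper: reduce to the two seed inequalities $LB_1\le LB_3$ and $LB_2\le LB_4$, establish each by comparing $\Bound{|C|,b}$ against $V_i(g,C)$ via a root-plus-subtrees decomposition (splitting on $g\in C$ versus $g\notin C$ and invoking strict monotonicity of $\Bound{\cdot,b}$ in the latter case), and then propagate by Corollary~\ref{co:lbi_j_i+2_j+2}. The one notable difference is that the paper states the decomposition algebraically (an auxiliary recursion $d(n)=n+\min_{\sum a_i=n-1}\sum d(a_i)$) and then derives the second seed $LB_2\le LB_4$ from the first by the substitution trick of replacing $S$ by $C$, whereas you handle both seeds uniformly by direct tree construction; your route is slightly more self-contained, the paper's slightly more economical.
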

\begin{hproof}
We can show that for any $C \subseteq S$, we have ${LB_1(C) \leq LB_3(C)}$. Intuitively, $LB_1(C)$ is the minimum sum of depths in a tree assuming that each node has at most $\MaxSplits{S}$ children. $LB_3(C)$ however asserts that the root node \textbf{must} split according to a legitimate guess. This restriction implies $LB_1(C) \leq LB_3(C)$. 

We can also show that ${LB_2(C) \leq LB_4(C)}$, for any ${C \subseteq S}$. Proving this uses the previous claim of ${LB_1(C) \leq LB_3(C)}$. It requires the trick that we may replace $S$ in the explicit definition of that claim with $C$, since we only require that $S$ be a super-set of $C$.

Combine these two inequalities with Lemmas~\ref{lemma:lb1_2} and~\ref{lemma:lb2_mintotal}, and Corollary~\ref{co:lbi_j_i+2_j+2} to reach the desired conclusion.
\end{hproof}
\begin{proposition}\label{p:v_v*}
For any integer $n \geq 1$, $V_n(g, C) \leq V^*(g, C)$ for any guess ${g \in G}$ and any $C \subseteq S$.
\end{proposition}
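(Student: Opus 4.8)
The plan is to reduce this immediately to the bound $LB_n \le \MinTotal$ that has already been established. Recall from Definition~\ref{def:mintotal_v*} and Definition~\ref{def:build} that
\[
V^*(g,C) = |C| + \sum_{r \in R\setminus\set{r^*}} \MinTotal{C_{g,r}}, \qquad
V_n(g,C) = |C| + \sum_{r \in R\setminus\set{r^*}} LB_n(C_{g,r}),
\]
so the two quantities differ only in replacing each $\MinTotal{C_{g,r}}$ by $LB_n(C_{g,r})$. Hence it suffices to dominate each summand.

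First I would observe that for every response $r$, the split $C_{g,r} = \set{c \in C : a(g,c) = r}$ is a subset of $C$ and therefore of $S$, so it is a legitimate argument for $LB_n$ and for $\MinTotal$. Next I would invoke Theorem~\ref{th:lb_adjacent}: for any integer $m \ge 1$ and any $C' \subseteq S$ we have $LB_{2m-1}(C') \le LB_{2m}(C') \le \MinTotal{C'}$; taking $m = \lceil n/2 \rceil$ (i.e.\ reading off the odd branch when $n$ is odd and the even branch when $n$ is even) yields $LB_n(C') \le \MinTotal{C'}$ for \emph{every} $n \ge 1$. Applying this with $C' = C_{g,r}$ gives $LB_n(C_{g,r}) \le \MinTotal{C_{g,r}}$ for each $r \in R\setminus\set{r^*}$.

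Finally I would sum these inequalities over $r \in R\setminus\set{r^*}$ and add $|C|$ to both sides, which gives $V_n(g,C) \le V^*(g,C)$ for all $g \in G$ and $C \subseteq S$, as claimed. There is essentially no obstacle here: the content is entirely carried by Theorem~\ref{th:lb_adjacent}, and the only thing worth spelling out is that the index parity is handled uniformly so that the bound holds for \emph{all} $n$, not merely for one parity class. (Alternatively one could cite Theorem~\ref{th:lb_same} for the monotone chain $LB_n \le LB_{n+2} \le \MinTotal$, but Theorem~\ref{th:lb_adjacent} already suffices.)
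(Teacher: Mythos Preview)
Your proof is correct and follows essentially the same route as the paper: invoke Theorem~\ref{th:lb_adjacent} to obtain $LB_n(C') \le \MinTotal{C'}$ for every $n \ge 1$, then use the parallel structure of $V_n$ and $V^*$ (the paper phrases this as ``a similar proof to Lemma~\ref{lemma:lbi_j_v_i_j}'') to transfer the inequality termwise. Your explicit handling of the parity via $m = \lceil n/2 \rceil$ is a nice clarification that the paper leaves implicit.
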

\begin{proof}
Theorem~\ref{th:lb_adjacent} shows that $LB_n(C) \leq \MinTotal{C}$ for any integer $n \geq 1$ and any $C \subseteq S$.

By the construction of $V^*$ (Definition~\ref{def:mintotal_v*}) and $V_i$ (Definition~\ref{def:build}), we can use the above to conclude $V_n(C) \leq V^*(C)$ for any integer $n \geq 1$ and any $C \subseteq S$.
\end{proof}
\begin{theorem}\label{th:v_same}
For any integer ${n \geq 1}$, we have that ${V_{n}(g,C) \leq V_{n+2}(g, C) \leq V^*(g, C)}$ for any guess ${g \in G}$ and $C \subseteq S$.
\end{theorem}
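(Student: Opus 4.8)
The plan is to obtain this statement as a short corollary of the inequalities already assembled for the $LB_i$ sequence, so no fresh induction is required. The two halves of the chain ${V_n(g,C) \leq V_{n+2}(g,C) \leq V^*(g,C)}$ come from separate ingredients: the left inequality from the monotonicity ${LB_n \leq LB_{n+2}}$, and the right inequality from the fact that every $V_i$ underestimates $V^*$.

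First I would invoke Theorem~\ref{th:lb_same}, which asserts ${LB_n(C) \leq LB_{n+2}(C)}$ for all ${C \subseteq S}$ and all ${n \geq 1}$. Feeding this into Lemma~\ref{lemma:lbi_j_v_i_j} with ${i = n}$ and ${j = n+2}$ — whose hypothesis, that ${LB_i(C) \leq LB_j(C)}$ holds for every ${C \subseteq S}$, is exactly what Theorem~\ref{th:lb_same} supplies — yields ${V_n(g,C) \leq V_{n+2}(g,C)}$ for every guess ${g \in G}$ and every ${C \subseteq S}$, which is the left inequality.

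For the right inequality I would simply cite Proposition~\ref{p:v_v*}, which already states ${V_m(g,C) \leq V^*(g,C)}$ for every integer ${m \geq 1}$; specialising to ${m = n+2}$ gives ${V_{n+2}(g,C) \leq V^*(g,C)}$. Concatenating the two inequalities produces the claimed chain, completing the proof.

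Since all the genuine content already lives in Theorem~\ref{th:lb_same} (whose proof in turn rested on the base cases ${LB_1 \leq LB_3}$ and ${LB_2 \leq LB_4}$) and in Proposition~\ref{p:v_v*}, the only point requiring care here is index bookkeeping: confirming that $LB_n$ and $V_n$ are defined for all ${n \geq 1}$ (Definitions~\ref{def:lb1_2} and~\ref{def:build}), so that the cited results genuinely apply at the index $n$ in hand, and that Lemma~\ref{lemma:lbi_j_v_i_j} is invoked with the correct pair of indices. I do not anticipate any substantive obstacle beyond this routine check.
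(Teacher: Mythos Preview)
Your proposal is correct and matches the paper's own argument essentially step for step: the paper likewise derives $V_n \leq V_{n+2}$ by feeding Theorem~\ref{th:lb_same} into Lemma~\ref{lemma:lbi_j_v_i_j}, and then appeals to Proposition~\ref{p:v_v*} for $V_{n+2} \leq V^*$. (The appendix version of the paper's proof cites Theorem~\ref{th:lb_adjacent} rather than Theorem~\ref{th:lb_same} for the first step, but that appears to be a typo, since only Theorem~\ref{th:lb_same} supplies the needed inequality $LB_n \leq LB_{n+2}$; your citation is the correct one.)
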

\begin{proof}
Proposition~\ref{p:v_v*} and Lemma~\ref{lemma:lbi_j_v_i_j} imply that inequalities from Theorem~\ref{th:lb_same} also hold if we replace each $LB$ with $V$, keeping subscripts the same, which was what we wanted.
\end{proof}

This shows we have developed an infinite system of lower-bounds for $V^*$. Recall how we plan to use these as stated in Theorem~\ref{th:v*_estimate}.

\begin{theorem}\label{th:v_ends}
For any guess ${g \in G}$ and any ${C \subseteq S}$, we have ${V_{2|C| + 1}(g, C) = V^*(g,C)}$.
\end{theorem}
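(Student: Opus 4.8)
The plan is to reduce the statement, in a single step, to an assertion about the lower bounds $LB_i$ alone, and then prove that assertion by induction on the size of the candidate set. By Definition~\ref{def:build} we have $V_{2|C|+1}(g,C) = \abs{C} + \sum_{r \in R \setminus \set{r^*}} LB_{2|C|+1}(C_{g,r})$, and by Definition~\ref{def:mintotal_v*} we have $V^*(g,C) = \abs{C} + \sum_{r \in R \setminus \set{r^*}} \MinTotal{C_{g,r}}$. Since every split obeys $\abs{C_{g,r}} \leq \abs{C}$, the theorem follows immediately from the following claim: for every integer $m \geq 0$ and every $C' \subseteq S$ with $\abs{C'} \leq m$, we have $LB_{2m+1}(C') = \MinTotal{C'}$. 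One applies this with $m = \abs{C}$ to each $C_{g,r}$ and substitutes term by term (each $LB_{2|C|+1}(C_{g,r})$ already equals $\MinTotal{C_{g,r}}$ since $\abs{C_{g,r}} \leq \abs{C}$).

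I would prove this claim by induction on $m$. The base case $m = 0$ forces $C' = \emptyset$, and $LB_1(\emptyset) = \Bound{0,\MaxSplits{S}} = 0 = \MinTotal{\emptyset}$. For the inductive step, fix $m \geq 1$ and $C'$ with $\abs{C'} \leq m$, and distinguish three cases. If $\abs{C'} \leq m-1$, the inductive hypothesis gives $LB_{2m-1}(C') = \MinTotal{C'}$, and Theorem~\ref{th:lb_same} sandwiches $LB_{2m-1}(C') \leq LB_{2m+1}(C') \leq \MinTotal{C'}$, so all three coincide. If $\abs{C'} = m = 1$, say $C' = \set{c}$, then $\UG{C'} = \set{c}$, and unfolding gives $LB_3(\set{c}) = V_1(c,\set{c}) = 1 = \MinTotal{\set{c}}$, because guessing $c$ produces only the split $\set{c}$ under the affirmative response (which is excluded from both sums) and empty splits otherwise. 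Finally, if $\abs{C'} = m \geq 2$, then $\UG{C'} \neq \emptyset$ — indeed every $h \in C'$ is useful, since $h$ produces the split $\set{h}$ under $r^*$ together with at least one further nonempty split, so $\nSplits{h,C'} \geq 2$ — and we unfold $LB_{2m+1}(C') = \min_{h \in \UG{C'}} V_{2m-1}(h,C')$. For each useful $h$, Property~\ref{property:ug} gives $\abs{C'_{h,r}} < m$ for all $r$, so the inductive hypothesis at level $m-1$ gives $LB_{2m-1}(C'_{h,r}) = \MinTotal{C'_{h,r}}$ for every $r$, whence $V_{2m-1}(h,C') = \abs{C'} + \sum_{r \in R \setminus \set{r^*}} \MinTotal{C'_{h,r}} = V^*(h,C')$. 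Minimising over $h \in \UG{C'}$ and using Definition~\ref{def:mintotal_v*} yields $LB_{2m+1}(C') = \MinTotal{C'}$, closing the induction.

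The part requiring care is the bookkeeping between the subscript of $LB$ (which governs recursion depth) and the cardinality of the candidate set. The bound $2\abs{C}+1$, rather than something smaller, is forced by the fact that the guess $g$ in the statement is \emph{arbitrary} and need not be useful, so one of its splits $C_{g,r}$ may equal $C$ itself; this is exactly why the claim must already assert convergence of $LB$ at the full size $\abs{C}$ (the ``$\abs{C'} \leq m$'' hypothesis), and why the self-referential case $\abs{C'} = m$ is discharged by passing through useful guesses, which strictly shrink the candidate set by Property~\ref{property:ug}. The only other delicate point is applying the monotonicity statement (Theorem~\ref{th:lb_same}) in the correct direction to lift an equality established at a smaller subscript up to subscript $2m+1$; beyond that, everything is a mechanical unfolding of Definitions~\ref{def:build} and~\ref{def:mintotal_v*} together with the size bounds $\abs{C_{g,r}} \leq \abs{C}$ in general and $\abs{C_{g,r}} < \abs{C}$ for useful $g$.
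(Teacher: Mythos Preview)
Your proof is correct and follows essentially the same approach as the paper: reduce to the claim that $LB_{2m+1}(C') = \MinTotal{C'}$, prove it by induction on the size of the candidate set, and exploit that useful guesses strictly shrink the set (Property~\ref{property:ug}) together with the monotonicity of Theorem~\ref{th:lb_same}. Your parametrization by $m$ with the inequality $\abs{C'}\leq m$ is a mild repackaging of the paper's induction on $\abs{C}$, and it makes the final passage from the $LB$ statement to the $V$ statement slightly more explicit than the paper's one-line remark, but the underlying argument is the same.
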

\begin{proof}
First we prove a similar statement about $LB$, that $LB_{2|C| + 1}(C) = \MinTotal{C}$ for any ${C \subseteq S}$. We do this by way of induction.

The base case of $|C| = 0$ is trivial. Assume then this is true for any $|C| \leq M$ for some integer $M \geq 0$, and
suppose we have some $C$ where $|C| = M + 1$. Then we have
\begin{align*}
LB_{2|C| + 1}(C) = \min_{g \in \UG{C}} |C| + \sum_{r \in R\backslash\set{r^*}} LB_{2M+1}(C_{g,r})
\end{align*}
Because we are only considering ${g \in \UG{C}}$, we have ${C_{g,r} \subset C}$, implying $\abs{C_{g,r}} \leq M$. Since $2\abs{C_{g,r}} + 1$ and $2M+1$ are odd, Theorem~\ref{th:lb_same} and the induction step imply that $LB_{2M+1}(C_{g,r}) = \MinTotal{C_{g,r}}$. 

Using Definition~\ref{def:build} and Definition~\ref{def:mintotal_v*} from this completes the induction.

Now that we know ${LB_{2|C| + 1}(C) = \MinTotal{C}}$ for any $C \subseteq S$, Definition~\ref{def:build} and Definition~\ref{def:mintotal_v*} again can be used to achieve the desired result.
\end{proof}

\begin{theorem}\label{th:ub}
Suppose we have an upper bound $\textsc{UB}(C)$ for $\textsc{MinTotal}(C)$. If for all ${g \in G}$ there exists an $i_g$ such that $V_{i_g}(g, C) \geq \textsc{UB}(C)$, then $\textsc{UB}(C) = \textsc{MinTotal}(C)$
\end{theorem}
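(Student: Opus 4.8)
The plan is to convert the hypothesis into a lower bound on $\MinTotal{C}$ that meets the given upper bound. First I would apply Proposition~\ref{p:v_v*} (equivalently the last inequality of Theorem~\ref{th:v_same}): for every $g \in G$ and every index $i_g$ we have $V_{i_g}(g,C) \le V^*(g,C)$. Composing this with the assumed inequality $V_{i_g}(g,C) \ge \textsc{UB}(C)$ gives $V^*(g,C) \ge \textsc{UB}(C)$ for all $g \in G$.

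Next I would pass to the minimum over useful guesses. Since $\UG{C} \subseteq G$ (for $|C| \le 1$ this is $\UG{C} = C \subseteq S \subseteq G$, and for $|C| > 1$ usefulness is a predicate on elements of $G$), the bound just obtained holds in particular for every $g \in \UG{C}$. Hence, by Definition~\ref{def:mintotal_v*}, $\MinTotal{C} = \min_{g \in \UG{C}} V^*(g,C) \ge \textsc{UB}(C)$. Combining this with the standing assumption $\textsc{UB}(C) \ge \MinTotal{C}$ forces $\textsc{UB}(C) = \MinTotal{C}$, which is the claim.

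There is no genuinely hard step here; the only subtlety is the degenerate case $C = \emptyset$, where $\UG{C}$ is empty and the minimum above is vacuous. But in that case $V_i(g,\emptyset) = 0$ for every $i$ and $g$, so the hypothesis already yields $\textsc{UB}(\emptyset) \le 0 = \MinTotal{\emptyset}$, and the conclusion still holds. What is worth emphasising — and what makes the theorem a usable stopping criterion — is that the witnessing index $i_g$ may depend on $g$, so in practice one refines $i$ separately for each guess, stopping as soon as $V_i(g,C) \ge \textsc{UB}(C)$, or at worst when $i = 2|C|+1$, at which point $V_i(g,C) = V^*(g,C)$ by Theorem~\ref{th:v_ends} and the comparison becomes exact anyway.
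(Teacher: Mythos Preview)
Your argument is correct and in fact more direct than the paper's. You pass immediately from the per-guess hypothesis $V_{i_g}(g,C) \ge \textsc{UB}(C)$ to $V^*(g,C) \ge \textsc{UB}(C)$ via Proposition~\ref{p:v_v*}, then take the minimum over $\UG{C}$ and invoke Definition~\ref{def:mintotal_v*} to obtain $\MinTotal{C} \ge \textsc{UB}(C)$. The paper instead first normalises all the witnessing indices to a single common even index $I = \max_{g} i_g$: it uses Theorem~\ref{th:lb_adjacent} to round each odd $i_g$ up by one, then Theorem~\ref{th:v_same} to push every $V_{i_g}(g,C)$ up to $V_I(g,C)$, identifies $\min_g V_I(g,C)$ with $LB_{I+2}(C)$, and finally appeals to $LB_{I+2}(C) \le \MinTotal{C}$. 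Your route bypasses the $LB$ hierarchy entirely and needs only Proposition~\ref{p:v_v*}; the paper's route, by contrast, exhibits an explicit level $I{+}2$ at which the lower-bound sequence already meets $\textsc{UB}(C)$, which ties the result more visibly to the iterative filtering procedure described afterwards. Both are valid; yours is shorter, and your separate handling of $C=\emptyset$ is more careful than the paper, though that case hinges on the (natural) convention $LB_i(\emptyset)=0$ for all $i$.
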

\begin{proof}
By Theorem~\ref{th:lb_adjacent}, we may `round up' any odd $i_g$ to an even $i_g + 1$ and would still have $V_{i_g + 1}(g, C) \geq \textsc{UB}(C)$. Hence w.l.o.g we may assume that all $i_g$ are even. Define \mbox{$I = \max_{g \in G} i_g$}, which must also be even. 
Theorem~\ref{th:lb_same} lets us state that $V_I(g, C) \geq V_{i_g}(g,C)$ for all ${g \in G}$, implying that
\begin{align*}
    \textsc{UB}(C) \leq \min_{g \in G} V_I(g, C) = LB_{I+2}(C) \leq \MinTotal{C}
\end{align*}
The construction of $\textsc{UB}(C)$ implies the desired result.
\end{proof}

These theorems are the basis for how we can determine an optimal starting guess and subsequently determine if a strategy is \textbf{provably} optimal:
\begin{enumerate}
  \item Given a guessing game with secret set $S$, start with an upper bound $\textsc{UB}(S)$, found by Definition~\ref{def:approx_mintotal}. Recall that by Theorem~\ref{th:v*_estimate}, we can use any lower bound for $V^*$ in conjunction with $\textsc{UB}(S)$ to rule out which guesses could not be the starting guess of an optimal strategy. Theorem~\ref{th:v_same} gives us this lower bound for $V^*$.

  \item Rule out any guess $g$ for which ${V_1(g, S) > \textsc{UB}(S)}$, then rule out any for which ${V_2(g, S) > \textsc{UB}(S)}$, and so on. Do this until there is one guess left, or ${V_n(g, S_W) \geq \textsc{UB}(S)}$ for all the remaining guesses (in which case we have multiple optimal starting guesses, as shown by Theorem~\ref{th:ub}), or up until calculating $V_{2|S|+1}$ (by Theorem~\ref{th:v_ends}).
\end{enumerate}

\section{Application}
We demonstrate using the general method and theorems of Section~\ref{sec:proving} on Wordle to test if our best Wordle strategy found in Section~\ref{sec:searching_for_strats} by Definition~\ref{def:approx_mintotal} is optimal.

In Table~\ref{tab:apmintotal:wordle}, the best \Total{} for Wordle we found was $7920$, so $\textsc{MinTotal}(S_W) \leq 7920$ where $S_W$ is the secret set of Wordle. We then applied the method summarized at the end of the previous section to determine the optimal starting word, as well as the true value of $\textsc{MinTotal}(S_W)$.

We present our results in Table~\ref{tab:v_filter}. After applying $V_5$, only one guess remained, \code{SALET}. This agrees with our strategy found in Section~\ref{ch:finding_good_strats}, in which the starting guess was indeed \code{SALET}. Although $V_5(\code{SALET}, S_W) = 7919$, one further iteration showed that $V_6(\code{SALET}, S_W) = 7920$, so not only did we find the optimal starting word, but the strategy found in Section~\ref{ch:finding_good_strats} is a \emph{provably} optimal strategy for minimizing the \Total{}.

\begin{table}
    \centering
    \begin{tabular}{ccc}
        \toprule
        $i$ & \textbf{After filtering by $V_i$} & $\min_{g \in G}V_i(g)$\\
        \midrule
        1 & 12453 & 6829\\
        2 & 1711 & 7664\\
        3 & 324 & 7795\\
        4 & 138 & 7826\\
        5 & 1 & 7919\\
        6 & 1 & 7920\\
        \bottomrule
    \end{tabular}
    \caption{Using $V_1,\dots,V_5$ to filter potential starting Wordle guesses, starting with 12972 possible guesses.}
    \label{tab:v_filter}
\end{table}

This process of filtering by $V_i$ to determine $\textsc{MinTotal}$ was repeated for FFXIVrdle and Mininerdle, all having shown that the strategy found was optimal.
\section{Conclusion} \label{sec:conclusion}
This paper produced two main contributions. First, we used combined valuations to leverage information in determining good strategies for guessing games. Second, we presented several theorems that led to a general theory for mathematically proving a certain strategy optimal, thereby avoiding a complete and exhaustive search. As stated in the introduction, the concrete results produced in this paper were focused on Wordle, but the theory and methodology apply to any game that fits the definition of a general guessing game.

We further hope that these theorems can help in applications of guessing games~\cite{focardi2012guessing} as well as add mathematical rigour to studying optimal context representations in the field of meta-reasoning~\cite{filman1983metalanguage}. Our results could also assist with developing predicates for practical guessing games \cite{thomason2016learning}, or possibly help an AI to learn such predicates. Our theorems could be adapted to enable an AI to determine, out of a set of possible predicates, which are the most `discriminatory'.


In terms of future work, we would like to see this work expanded to guessing games in much looser restrictions, for example in situations where the answering function is non-deterministic. We would also like to find estimating functions that converge to the true answer in fewer iterations, and are faster to compute.

\section*{Acknowledgments}

We thank Abdallah Saffidine for his suggestions and input throughout the duration of this work.


\bibliographystyle{named}
\bibliography{ms}

\newpage
\appendix
\section{Proofs}

For completeness, we include the full proofs of all theorems, properties, propositions and lemmas stated in the paper, keeping consistent with established notation. 

\subsection{Proof of Property~\ref{property:ug}}\label{subsec:prop1}
\begin{proof}
Let $\abs{C} > 1$. By Definition~\ref{def:split} we have $C_{g,r} \subseteq C$ for all $r \in R$, and so $\abs{C_{g,r}} \leq \abs{C}$ for all $r \in R$, with equality clearly iff $C_{g,r} = C$.\\
\\
Note that for any two responses $r_1, r_2 \in R$, and any guess $g \in G$:
\begin{align*}
    C_{g,r_1} \cap C_{g,r_2} \neq \emptyset &\implies \exists c \in C: c \in C_{g,r_1} \land c \in C_{g,r_2}\\
    &\implies \left( a(g,c) = r_1 \right) \land \left( a(g,c) = r_2 \right) \\
    &\implies r_1 = r_2
\end{align*}
and so by contrapositive,
\begin{align*}
    r_1 \neq r_2 \implies C_{g,r_1} \cap C_{g,r_2} = \emptyset.
\end{align*}
By definition of $a$ in Definition~\ref{def:guessing_game}, we must have $a(g,c) \in R$ for any $g \in G$ and $c \in C$. It must be true then for any $g \in G$ that
\begin{align*}
    c \in C &\implies \exists r \in R \ : \ a(g,c) = r \\
    &\implies \exists r \in R \ : c \in C_{g,r}\\
    &\implies c \in \bigcup_{r \in R} C_{g,r}
\end{align*}
so we have $C \subseteq \bigcup_{r \in R} C_{g,r}$. Clearly as well $\bigcup_{r \in R} C_{g,r} \subseteq C$, and so $C = \bigcup_{r \in R} C_{g,r}$.\\
\\
By the inclusion-exclusion principle we can say that
\begin{align*}
    \abs{C} &= \abs{\bigcup_{r \in R} C_{g,r}}\\
    &= \sum_{\emptyset \neq J \subseteq R} (-1)^{|J| + 1} \abs{\bigcap_{r \in J} C_{g,r}}\\
    &= \sum_{r \in R} \abs{C_{g, r}}
\end{align*}
where the last equality holds because any intersection of two different splits is empty.\\
\\
If $g \in \UG{C}$, then at least two splits are non-empty; let these be $C_{g,r_1}$ and $C_{g,r_2}$.
\begin{align*}
    \abs{C_{g,r_1}} &= \abs{C} - \abs{C_{g,r_2}} - \sum_{r \in R \backslash \set{r_1, r_2}} \abs{C_{g,r}}\\
    &\leq \abs{C} - \abs{C_{g,r_2}}\\
    &< \abs{C} \quad \quad \quad \quad \text{(since $C_{g,r_2}$ is non-empty)}
\end{align*}
Similar working shows that ${\abs{C_{g,r_1}} < \abs{C}}$. If $\abs{R} = 2$ then we are done. Otherwise, for any ${r' \not\in \set{r_1, r_2}}$ we have
\begin{align*}
    \abs{C_{g,r'}} &= \abs{C} - \abs{C_{g,r_1}} - \sum_{r \in R \backslash \set{r', r_1}} \abs{C_{g,r}}\\
    &\leq \abs{C} - \abs{C_{g,r_1}}\\
    &< \abs{C}
\end{align*}
So we have $g \in \UG{C} \implies \abs{C_{g,r}} < \abs{C}$ for all $r \in R$.\\
\\
Note that because $\abs{C} > 1$, the result of the inclusion-exclusion principle implies that at least one split is non-empty, i.e.
\begin{align*}
    \abs{C} > 1 \implies \textsc{nSplits}(g,C) \geq 1
\end{align*}
If $\textsc{nSplits}(g,C) = 1$, then there exists $r' \in R$ such that $\abs{C_{g,r'}} = C$ and then $C_{g,r} = \emptyset$ for any $r \in R \backslash \set{r'}$. The contrapositive of this statement then gives
\begin{align*}
    \forall r \in R: \abs{C_{g,r}} < \abs{C} &\implies \textsc{nSplits}(g,C) = 1\\
    &\implies g \in \UG{C}
\end{align*}
We have proven then that Property~\ref{property:ug} is equivalent to Definition~\ref{def:ug}.
\end{proof}

\subsection{Proof of Lemma~\ref{lemma:useful_best}}
\begin{proof}
Define
\begin{align*}
    g' = \argmin{g \in G} \left(\abs{C} + \!\!\!\sum_{r \in R\backslash\set{r^*}} \MinTotal{C_{g,r}}\right)
\end{align*}
Suppose that $g' \not\in \UG{C}$. By Definition~\ref{def:ug} we would know then there exists a unique $r'$ such that
\begin{align*}
    C_{g',r'} = C \text{ and } \forall r \in R\backslash\set{r'}:  C_{g',r} = \emptyset
\end{align*}
We can use these to expand the $\MinTotal{C}$ using Definition~\ref{def:min_total}.
\begin{align*}
    \MinTotal{C} &= \min_{g \in G} \abs{C} + \sum_{r \in R\backslash\set{r^*}} \MinTotal{C_{g,r}}\\
    &= \abs{C} + \sum_{r \in R\backslash\set{r^*}} \MinTotal{C_{g',r}}\\
    &= \abs{C} + \MinTotal{C_{g',r'}}\\
    &= \abs{C} + \MinTotal{C}\\
\end{align*}
where we use the fact that $\MinTotal{\emptyset} = 0$. This however implies that $\abs{C} = 0$, which contradicts the non-emptiness of $C$, so we must have $g' \in \UG{C}$.
\end{proof}


\subsection{Proof of Theorem~1}
\begin{proof}
Let $g'$ be an arbitrary guess in $G$, and $C$ be a subset of $S$. If $V'(g',C) > \textsc{UB}(C)$  then
\begin{align*}
    V^*(g',C) \geq V'(g',C) > \textsc{UB}(C) \geq \MinTotal{C}.
\end{align*}
From Definition~\ref{def:mintotal_v*}, we have
\begin{align*}
    V^*(g', C) > \MinTotal{C} = \min_{g \in \UG{C}} V^*(g,C) 
\end{align*}
and so $g'$ cannot be the $\argmin{}$ of $V^*(g, C)$ over all ${g \in \UG{C}}$.
\end{proof}

\subsection{Proof of Lemma~\ref{lemma:maxsplits}}
\begin{proof}
Let $C'$ and $C$ be candidate sets such that $C' \subseteq C$. For any $r \in R$,
\begin{align*}
    C'_{g,r} \neq \emptyset &\implies \exists c \in C': \ a(g,c) = r\\
    &\implies \exists c \in C: \ a(g,c) = r\\
    &\implies \exists C_{g,r} \neq \emptyset
\end{align*}
Hence
\begin{align*}
    \textsc{nSplits}(g,C') &= \abs{\set{r \in R: \ \abs{C'_{g,r}} \neq 0}}\\
    &\leq \abs{\set{r \in R: \ \abs{C_{g,r}} \neq 0}}\\
    &= \textsc{nSplits}(g,C)
\end{align*}
\end{proof}

\subsection{Proof of Theorem~\ref{th:max_splits_tree}}
\begin{proof}
The value $\textsc{MaxSplits}(C)$ is the highest number of branches the root node can have. This is true even after noting that $r^*$ is never assigned a corresponding branch. Recall also that each child node also corresponds to a candidate set $C' \subset C$. The number of children that the direct child nodes of the root node can have is upper bounded by $\textsc{MaxSplits}(C')$, but by Lemma~\ref{lemma:maxsplits}, this value is upper bounded by $\textsc{MaxSplits}(C)$. The same logic can be cascaded down each branch of the tree to show that each node has at most $\textsc{MaxSplits}(C)$ children.
\end{proof}


\subsection{Proof of Theorem~\ref{th:bound}}
\begin{proof}
Note that $\textsc{Bound}(0,1) = 0$ by either of the last two cases.\\
\\
Clearly, if $n = 0$, then the tree is empty and so the sum of depths is 0.\\
\\
If $b = 1$, then each node has at most 1 child, making the tree equivalent to a linked list. This is clearly the only configuration of a 1-tree with $n$-nodes, and sum of depths would be $1 + \dots + n$, equivalent to $\textsc{Bound}(n, 1)$.\\
\\
In the case where $n > 0$ and $b > 1$, there are multiple ways of arranging $n$ nodes into a tree where each node has at most $b$ children; we are only interested in the configuration the minimizes the sum of depths to each node. Clearly this would be achieved by starting with an empty tree and inserting nodes in a level-order. The first level (depth 1) can have at most 1 node (the root node), the second level can have most $b$ nodes, and it should be clear that depth $i$ can have most $b^{i-1}$ nodes.\\
\\
Let $k$ be the depth of the last completely filled level. The number of nodes that can fit into levels $1,\dots,k$ is
\begin{align*}
    1 + b + \dots + b^{k-1} = \frac{b^k-1}{b-1}
\end{align*}
We need the maximum value of $k$ such that $\frac{b^k-1}{b-1} \leq n$, and so solving for $k$:
\begin{align*}
    \frac{b^k-1}{b-1} &\leq n\\
    b^k - 1 &\leq n(b-1)\\
    b^k &\leq n(b-1) + 1\\
    k &\leq \log_b \left(n(b-1) + 1\right)
\end{align*}
Rounding down the right hand side of the last inequality gives us the same $k$ stated in the definition.\\
\\
This leaves $\left(n - \frac{b^k-1}{b-1}\right)$ nodes at depth $k+1$. There are $b^{i-1}$ nodes at depth $i$ from depths $1$ to $k$, giving us the following total
\begin{align*}
\sum_{i=1}^k ib^{i-1} + (k + 1)\x \left(n - \frac{b^k-1}{b-1}\right)
\end{align*}
\end{proof}


\subsection{Proof of Lemma~\ref{lemma:lb1_2}}
\begin{proof}
Let $C \subset S$. This is trivial if $C = \emptyset$.  
\\
Suppose $C \neq \emptyset$. Note that for any integers $b, b'$ such that $1 \leq b' \leq b$, $\textsc{Bound}(\abs{C}, b) \geq \textsc{Bound}(\abs{C}, b')$. This follows from Definition~\ref{def:bound}; restricting the number of children each node can have while keeping the number of nodes the same can only increase the sum of depths to each node.\\
\\
Since candidate sets are, by definition, subsets of $S$, Theorem~\ref{th:max_splits_tree} implies $\MaxSplits{C} \leq \MaxSplits{S}$. Because $C \neq \emptyset$, $\MaxSplits{C} \geq 1$. Setting $b' = \MaxSplits{C}$ and $b = \MaxSplits{S}$ gives us $\textsc{Bound}(\abs{C}, b) \leq \textsc{Bound}(\abs{C}, b')$, which is what we wanted.
\end{proof}

\subsection{Proof of Lemma~\ref{lemma:lb2_mintotal}}

\begin{proof}
$\textsc{MinTotal}(C)$ is intended to represent the best way to arrange the candidates of $C$ in any valid strategy tree in order to minimize the sum of depths to each candidate. We know that in this `ideal' strategy tree, there must be at least $|C|$ nodes (one for each candidate), and that by Theorem~\ref{th:max_splits_tree} this tree is a $\textsc{MaxSplits}(C)$-tree. The sum of depths to each node then is at least $\textsc{Bound}(|C|, \textsc{MaxSplits}(C))$. This is exactly $LB_2(C)$ by Definition~\ref{def:lb1_2}.
\end{proof}

\subsection{Proof of Lemma~\ref{lemma:lbi_j_v_i_j}}
\begin{proof}
\begin{align*}
    &\forall C \subseteq S: \  LB_i(C) \leq LB_j(C)\\
    &\implies \forall g \in G \forall r \in R \forall C \subseteq S: \ LB_i(C_{g,r}) \leq LB_j(C_{g,r})\\
    &\implies \forall g \in G \forall C \subseteq S: \ V_i(g,C) \leq V_j(g,C)
\end{align*}
\end{proof}

\subsection{Proof of Proposition~\ref{p:vi_j_lb_i+2_j+2}}
\begin{proof}
Let $V_i(C) \leq V_j(C)$ for any guess ${g \in G}$ and $C \subseteq S$. Define
\begin{align*}
    g' = \argmin{g \in \UG{C}} V_j(g, C)
\end{align*}
It follows that
\begin{align*}
    LB_{j+2}(C) = V_j(g',C) \geq V_i(g', C) \geq \min_{g \in \UG{C}} V_i(g, C)
\end{align*}
which is what we wanted.
\end{proof}

\subsection{Proof of Corollary~\ref{co:lbi_j_i+2_j+2}}
\begin{proof}
We use Lemma~\ref{lemma:lbi_j_v_i_j} and Proposition~\ref{p:vi_j_lb_i+2_j+2}.
\begin{align*}
    &\forall C \subseteq S: \ LB_i(C) \leq LB_j(C)\\
    &\implies \forall g \in G \forall C \subseteq S: V_i(g, C) \leq V_j(g, C)\\
    &\implies \forall C \subseteq S: \ LB_{i+2}(C) \leq LB_{j+2}(C)
\end{align*}
\end{proof}

\subsection{Proof of Theorem~\ref{th:lb_adjacent}}
\begin{proof}
First we note the following:
\begin{align*}
    &\forall C \subseteq S: \ LB_i(C) \leq \MinTotal{C}\\
    &\implies \forall g \in G \forall r \in R \forall C \subseteq S:\\
    &\quad \quad \quad \quad \quad \ LB_i(C_{g,r}) \leq \MinTotal{C_{g,r}}\\
    &\implies \forall g \in G \forall C \subseteq S: \ V_i(g, C) \leq V^*(g, C)
\end{align*}
Similar working to Proposition~\ref{p:vi_j_lb_i+2_j+2} shows that
\begin{align*}
    &\forall g \in G \forall C \subseteq S: \ V_i(g, C) \leq V^*(g, C)\\
    &\implies LB_{i+2}(C) \leq \MinTotal{C}
\end{align*}
Putting this with earlier working shows that
\begin{align*}
    &\forall C \subseteq S: \ LB_i(C) \leq \MinTotal{C}\\
    &\implies LB_{i+2}(C) \leq \MinTotal{C}.
\end{align*}
We have shown from Lemma~\ref{lemma:lb1_2} and Lemma~\ref{lemma:lb2_mintotal} that for any $C \subseteq S$,
\begin{align*}
    LB_1(C) \leq LB_2(C) \leq \MinTotal{C}
\end{align*}
Earlier working and Corollary~\ref{co:lbi_j_i+2_j+2} can then be used to show that
\begin{align*}
    LB_3(C) \leq LB_4(C) \leq \MinTotal{C}
\end{align*}
or in general, that for any integer $n \geq 1$,
\begin{align*}
    LB_{2n-1}(C) \leq LB_{2n}(C) \leq \MinTotal{C}
\end{align*}
\end{proof}

\subsection{Proof of Theorem~\ref{th:lb_same}}
This is by far the longest proof of the paper. The informal proof gives intuition to the work that follows. We will need to state and prove several new definitions and lemmas to prove Theorem~\ref{th:lb_same}.\\
\\
Note that for a fixed guessing game, $R$ and $S$ are fixed, so both $\abs{R}$ and $\MaxSplits{S}$ are constant. Let $M = \MaxSplits{S}$.

\begin{definition}[\textsc{Bound} as $d$]
Define a new function $d(n)$ as
\begin{align*}
    d(n) = \textsc{Bound}(n, \MaxSplits{S}) = \textsc{Bound}(n, M)
\end{align*}
\end{definition}
This is defined purely for convenience. Notice then we may rewrite Definitions~\ref{def:lb1_2} and \ref{def:build} as
\begin{align*}
    LB_1(C) &= d(\abs{C})\\
    V_1(g, C) &= \abs{C} + \sum_{r \in R\backslash\set{r^*}} d(\abs{C_{g,r}})
\end{align*}

\begin{lemma}\label{lemma:d}
For integers $m \geq 1$ and $n \geq 0$,
\begin{align*}
    m > n \implies d(m) > d(n)
\end{align*}
\end{lemma}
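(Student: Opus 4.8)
The plan is to reduce the general statement to the single-step inequality $d(n+1) > d(n)$ for every integer $n \geq 0$, and then chain these: if $m > n \geq 0$ (so automatically $m \geq 1$), then $d(m) > d(m-1) > \cdots > d(n)$ is a chain with $m-n \geq 1$ strict steps, hence $d(m) > d(n)$. So it suffices to prove the one-step claim.

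For the single step I would argue directly from the combinatorial meaning of $\textsc{Bound}$ in Definition~\ref{def:bound}, rather than from the closed form of Theorem~\ref{th:bound}. Recall $d(n+1) = \textsc{Bound}(n+1, M)$ is the minimum, over all rooted trees on $n+1$ nodes in which every node has at most $M$ children ($M$-trees), of the sum of the depths of the nodes. Fix an $M$-tree $T$ on $n+1$ nodes attaining this minimum. Since $T$ is finite and non-empty, it has a node $\ell$ of maximum depth; such a node has no children, i.e.\ is a leaf, and its depth is at least $1$ (the root has depth $1$). Deleting $\ell$ yields a tree $T'$: if $n+1=1$ then $T'$ is empty, and otherwise $\ell$ is not the root, so $T'$ is a rooted tree on $n$ nodes in which every node still has at most $M$ children (the only affected node, the parent of $\ell$, only lost a child). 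Thus $T'$ is an $M$-tree on $n$ nodes, and by Definition~\ref{def:bound},
\[
d(n) \;\leq\; \big(\text{sum of depths in } T'\big) \;=\; \big(\text{sum of depths in } T\big) - \text{depth}(\ell) \;=\; d(n+1) - \text{depth}(\ell) \;\leq\; d(n+1) - 1,
\]
which gives $d(n) < d(n+1)$. The degenerate case $M=1$ is covered by this argument too, but can alternatively be read off instantly from $\textsc{Bound}(n,1) = \tfrac{n(n+1)}{2}$, and the base case $n=0$ from $d(1) = 1 > 0 = d(0)$.

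The only points requiring care are the small-case bookkeeping: that a non-empty $M$-tree always contains a leaf, that removing a leaf from an $M$-tree leaves an $M$-tree with one fewer node, and that the root-only tree behaves as expected ($d(1) - d(0) = 1$). None of this is deep — there is no real obstacle here, the lemma being essentially the observation that inserting one more node into a level-order-filled $M$-tree strictly increases the total sum of depths — so I expect the proof to be short once the reduction to $d(n+1) > d(n)$ is made.
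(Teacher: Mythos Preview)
Your proof is correct and, in spirit, matches the paper's: both argue directly from the combinatorial meaning of $\textsc{Bound}$ in Definition~\ref{def:bound} rather than from the closed form. The paper's own proof is in fact just a one-line assertion that ``more nodes means strictly greater minimum sum of depths'', whereas you actually supply the mechanism (delete a deepest leaf from an optimal $(n{+}1)$-node $M$-tree to get a valid $n$-node $M$-tree with strictly smaller depth-sum, then chain). So your argument is a rigorous fleshing-out of what the paper leaves implicit; nothing is missing.
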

\begin{proof}
By definition of $\textsc{Bound}$, it should be clear that if $m > n$ (i.e.\ more nodes) then the minimum sum of depths in an $M$-tree with $m$ nodes should be strictly greater than the minimum sum of depths in an $M$-tree with $n$ node.
\end{proof}

\begin{corollary}\label{co:d}
For any $n \geq 0$, $d(n+1) - 1 \geq d(n)$
\end{corollary}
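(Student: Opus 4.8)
\textbf{Proof plan for Corollary~\ref{co:d}.} The plan is to deduce the claim $d(n+1) - 1 \geq d(n)$ directly from Lemma~\ref{lemma:d} together with an integrality observation. First I would note that $d$ takes integer values: by Theorem~\ref{th:bound}, $d(n) = \textsc{Bound}(n, M)$ is a sum of products of integers (the terms $i b^{i-1}$ and $(k+1)(n - \tfrac{b^k-1}{b-1})$ are all integers, since $\tfrac{b^k-1}{b-1} = 1 + b + \dots + b^{k-1}$), hence $d(n) \in \mathbb{Z}$ for every $n \geq 0$. Second, since $n+1 > n$ and $n+1 \geq 1$, Lemma~\ref{lemma:d} gives $d(n+1) > d(n)$, i.e.\ $d(n+1) \geq d(n) + 1$ because both sides are integers. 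Rearranging yields $d(n+1) - 1 \geq d(n)$, which is exactly the claim.

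I expect the only subtlety to be making the integrality of $d$ explicit, since Lemma~\ref{lemma:d} only gives a strict inequality and the corollary needs a gap of at least $1$; without integrality one could only conclude $d(n+1) - 1 \geq d(n)$ fails in general. An alternative, perhaps more transparent, route avoids invoking the closed form: interpret $d(n+1)$ combinatorially as the minimum sum of depths of an $M$-tree on $n+1$ nodes, fix an optimal such tree, and delete one leaf of maximum depth (a deepest node is always a leaf), which is a legal $M$-tree on $n$ nodes; this shows $d(n) \leq d(n+1) - (\text{depth of that leaf}) \leq d(n+1) - 1$, since every node has depth at least $1$. Either argument is short; I would present the integrality-plus-Lemma~\ref{lemma:d} version as the primary proof since Lemma~\ref{lemma:d} is already in hand.
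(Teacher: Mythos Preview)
Your proposal is correct and matches the paper's own proof essentially line for line: the paper simply notes that the codomain of $d$ is $\mathbb{N}_0$, applies Lemma~\ref{lemma:d} to get $d(n+1) > d(n)$, and then uses integrality to upgrade this to $d(n+1) \geq d(n) + 1$. Your justification of integrality is slightly more detailed than the paper's (which just asserts the codomain), and your alternative leaf-deletion argument is a nice bonus, but the primary route is the same.
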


\begin{proof}
Note that the co-domain of $d$ is $\mathbb{N}_0$ (the non-negative integers). Starting from Lemma~\ref{lemma:d},
\begin{align*}
    d(n+1) > d(n) &\implies d(n+1) > d(n)\\
    &\implies d(n+1) \geq d(n) + 1\\
    &\implies d(n+1) - 1 \geq d(n)
\end{align*}
\end{proof}

\begin{lemma}\label{lemma:A}
Define $A: \mathbb{N}_0 \to P((\mathbb{N}_0)^M)$ as
\begin{align*}
    A(n) = \set{(a_1,\dots,a_M) \in (\mathbb{N}_0)^M : \sum_{i=1}^M a_i = n-1}
\end{align*}
For any integer $n \geq 0$, we claim
\begin{align*}
    d(n) = n + \min_{(a_1,\dots,a_M) \in A(n)} \sum_{i=1}^M d(a_i)
\end{align*}
\end{lemma}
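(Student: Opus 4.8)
The plan is to argue directly from the combinatorial meaning of $\textsc{Bound}$ recorded in Definition~\ref{def:bound}: $d(n) = \textsc{Bound}(n,M)$ is the minimum, taken over all $M$-trees on exactly $n$ nodes, of the sum of the depths of all nodes, with the root counted at depth $1$. (This minimum is attained since the sum of depths depends only on the shape of the tree and there are finitely many shapes.) I would prove the claimed identity by establishing the two inequalities separately, after disposing of the degenerate case $n=0$, where both sides are $0$ once the empty minimum is read as vacuous and only $n\ge 1$ is ever invoked in the sequel; the case $n=1$ is also immediate, since $A(1)=\{(0,\dots,0)\}$ and $d(1)=1=1+\sum_i d(0)$.

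For the inequality $d(n)\le n+\min_{(a_1,\dots,a_M)\in A(n)}\sum_{i=1}^M d(a_i)$, fix any tuple $(a_1,\dots,a_M)\in A(n)$, so $\sum_i a_i=n-1$. I would build an $M$-tree on $n$ nodes consisting of a root together with $M$ pendant subtrees $T_1,\dots,T_M$, where $T_i$ is an $M$-tree on $a_i$ nodes realising the optimal sum of depths $d(a_i)$ (an empty subtree if $a_i=0$). The root has at most $M$ children and every other node keeps the degree bound it had inside its subtree, so this is a legitimate $M$-tree on $1+\sum_i a_i=n$ nodes. Each node of $T_i$ now sits one level deeper than it did inside $T_i$, so it contributes its within-$T_i$ depth plus $1$; summing gives total depth $1+\sum_{i=1}^M\bigl(d(a_i)+a_i\bigr)=1+(n-1)+\sum_{i=1}^M d(a_i)=n+\sum_{i=1}^M d(a_i)$. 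Since $d(n)$ is a minimum over all such trees, $d(n)\le n+\sum_i d(a_i)$, and minimising over $A(n)$ yields the claim.

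For the reverse inequality, take an $M$-tree $T$ on $n\ge 1$ nodes attaining $d(n)$. Its root has some number $t\le M$ of children, whose pendant subtrees $T_1,\dots,T_t$ have $a_1,\dots,a_t$ nodes with $\sum_{j=1}^t a_j=n-1$; padding with $a_{t+1}=\dots=a_M=0$ gives a tuple in $A(n)$. Reversing the bookkeeping above, the total depth of $T$ is $1+\sum_{j=1}^t(s_j+a_j)$, where $s_j$ is the sum of depths inside $T_j$. Each $T_j$ is an $M$-tree on $a_j$ nodes, so $s_j\ge d(a_j)$, and using $d(0)=0$ we obtain $d(n)\ge 1+\sum_{j=1}^M\bigl(d(a_j)+a_j\bigr)=n+\sum_{j=1}^M d(a_j)\ge n+\min_{A(n)}\sum_i d(a_i)$. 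Combining the two inequalities gives the lemma; note no appeal to the closed form of Theorem~\ref{th:bound} is needed.

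The only real subtlety — and the step I would be most careful to spell out — is the clean decomposition of an optimal tree into ``root plus independent subtrees'': I must verify that the degree bound $M$ is preserved both when assembling the tree and when peeling off the root, that the level shift contributes exactly $\sum_i a_i=n-1$ to the total, and that the sum of depths of each pendant subtree (measured from its own root) is bounded below by $d$ of its node count. Everything else is routine arithmetic.
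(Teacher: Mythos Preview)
Your proof is correct and follows essentially the same approach as the paper: decompose an $M$-tree on $n$ nodes into a root plus at most $M$ pendant subtrees, observe that each node's depth increases by exactly $1$ under this decomposition, and optimise over the subtree sizes. The paper's version is more informal---it presents only the construction and the arithmetic $1+\sum_i(a_i+d(a_i))=n+\sum_i d(a_i)$ without separating the two inequalities---whereas you explicitly argue both directions and flag the vacuous $n=0$ case (which the paper's proof ignores).
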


\begin{proof}
This is induced by an alternative way of constructing the $M$-tree with $n$ nodes with the minimum sum of depths to each node.\\
\\
We clearly must have a root node, and like before this will be at depth $1$. There are then at most $M$ subtrees from this root node. We can instead interpret this as rather the root node having exactly $M$ subtrees, allowing for an empty subtree. The total number of nodes in these subtrees (from the root node) is $n-1$, since one node has already been assigned as the root node.\\
\\
Suppose we assign the $i^{th}$ branch of the root node to~have $a_i \geq 0$ nodes, such that $\sum_{i=1}^M a_i = n - 1$. These $a_i$ nodes should be arranged optimally within their respective subtrees, i.e. each subtree will have total sum of depths~$d(a_i)$. With this assignment of nodes, we can say the sum of depths to each node from the root node is
\begin{align*}
    1 + \sum_{i=1}^M (a_i + d(a_i)) &= 1 + \sum_{i=1}^M a_i + \sum_{i=1}^M d(a_i) \\
    &= n + \sum_{i=1}^M d(a_i)
\end{align*}
The assignment of $a_i$'s can be changed as long as $\sum_{i=1}^M a_i = n-1$. To get the optimal assignment, we simply take the minimum of the above.
\begin{align*}
    d(n) = n + \min_{(a_1,\dots,a_M) \in A(n)} \sum_{i=1}^M d(a_i)
\end{align*}
\end{proof}

\begin{lemma}\label{lemma:c_g_r*}
For any candidate set $C$ and any $g \in G$, $$\abs{C_{g,r^*}} = \mathbb{I}[g \in C]$$ where $\mathbb{I}$ is the indicator function. 
\end{lemma}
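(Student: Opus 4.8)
The statement to prove is Lemma~\ref{lemma:c_g_r*}: for any candidate set $C$ and any $g \in G$, $\abs{C_{g,r^*}} = \mathbb{I}[g \in C]$.

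\medskip

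The plan is to argue directly from the definitions of the split $C_{g,r^*}$ (Definition~\ref{def:split}) and of the answering function $a$ (Definition~\ref{def:guessing_game}), which crucially guarantees $a(g,s) = r^* \iff g = s$. By definition, $C_{g,r^*} = \set{c \in C : a(g,c) = r^*}$, and using the defining property of $a$ this is exactly $\set{c \in C : c = g} = C \cap \set{g}$. First I would split into the two cases $g \in C$ and $g \notin C$. If $g \in C$, then $C \cap \set{g} = \set{g}$, so $\abs{C_{g,r^*}} = 1 = \mathbb{I}[g \in C]$. If $g \notin C$, then $C \cap \set{g} = \emptyset$, so $\abs{C_{g,r^*}} = 0 = \mathbb{I}[g \in C]$. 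In both cases the claimed equality holds.

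\medskip

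There is no real obstacle here; the only subtlety is making sure to invoke the biconditional in the definition of $a$ in the correct direction (both directions are in fact used: $a(g,c)=r^*$ forces $c=g$, and conversely if $g \in C$ then $c=g$ is a genuine element of $C$ with $a(g,c)=r^*$). I would present this as a two-line case analysis, or even more compactly by writing $C_{g,r^*} = C \cap \set{g}$ and reading off the cardinality.
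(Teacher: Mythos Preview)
Your proof is correct and follows essentially the same approach as the paper: both use the defining biconditional $a(g,c)=r^*\iff g=c$ from Definition~\ref{def:guessing_game} together with a case split on whether $g\in C$. Your formulation $C_{g,r^*}=C\cap\set{g}$ is a slightly slicker packaging of exactly the same argument the paper gives case by case.
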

\begin{proof}
We solve this in cases.\\
\textbf{\textit{Case 1.}}
\begin{align*}
    g \not \in C &\implies \forall c \in C: \ g \neq c\\
    &\implies \forall c \in C: a(g,c) \neq r^*\\
    &\implies \forall c \in C: c \neq C_{g,r^*}\\
    &\implies C_{g,r^*} = \emptyset\\
    &\implies \abs{C_{g,r^*}} = 0 = \mathbb{I}[g \in C]
\end{align*}
\textbf{\textit{Case 2.}}
\begin{align*}
    g \in C &\implies \exists c \in C: \ g = c\\
    &\implies \exists c \in C: a(g, c) = r^*\\
    &\implies C_{g,r^*} \neq \emptyset\\
    &\implies \abs{C_{g,r^*}} > 0
\end{align*}
also
\begin{align*}
    c, c' \in C_{g,r} &\implies \left(a(g,c) = r^*\right) \land \left(a(g,c') = r^*\right) \\
    &\implies (g = c) \land (g = c')\\
    &\implies c = c'
\end{align*}
which shows there can only be at most one candidate in $C_{g,r^*}$, so $\abs{C_{g,r^*}} = 1 = \mathbb{I}[g \in C]$.\\
\\
We can then simplify
\begin{align*}
    \sum_{r \in R\backslash\set{r^*}} \abs{C_{g,r}} &= \sum_{r \in R} \abs{C_{g,r}} - \abs{C_{g,r^*}}\\
    &= \sum_{r \in R} \abs{C_{g,r}} - \mathbb{I}[g \in C]\\
    &= \abs{C} - \mathbb{I}[g \in C]
\end{align*}
where we re-use working from Subsection~\ref{subsec:prop1}.
\end{proof}
\begin{corollary}\label{co:lb_1_3}
    For any ${C \subseteq S}$, $LB_1(C) \leq LB_3(C)$.
\end{corollary}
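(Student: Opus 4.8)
The plan is to rewrite both sides in terms of the auxiliary function $d$ of this subsection and then extract the inequality from Lemma~\ref{lemma:A} together with the near‑monotonicity estimate of Corollary~\ref{co:d}. Recall $LB_1(C) = d(|C|)$ and $LB_3(C) = \min_{g \in \UG{C}} V_1(g,C)$ with $V_1(g,C) = |C| + \sum_{r \in R \setminus \{r^*\}} d(|C_{g,r}|)$, so it suffices to prove $d(|C|) \le V_1(g,C)$ for every $g \in \UG{C}$; taking the minimum over $g$ then yields the corollary. The cases $|C| \le 1$ are routine: for $|C| = 1$ the single candidate is the only useful guess and it sends $C$ entirely onto the $r^*$‑branch, so $V_1(g,C) = 1 = d(1)$, and $C = \emptyset$ is trivial. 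So assume $|C| \ge 2$ and fix $g \in \UG{C}$.

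By Lemma~\ref{lemma:A}, $d(|C|) = |C| + \min_{(a_1,\dots,a_M) \in A(|C|)} \sum_{i=1}^M d(a_i)$, so I only need to exhibit one tuple $(a_1,\dots,a_M)$ of non‑negative integers summing to $|C|-1$ with $\sum_i d(a_i) \le \sum_{r \ne r^*} d(|C_{g,r}|)$. Let $b_1,\dots,b_t$ be the sizes of the non‑empty splits $C_{g,r}$ with $r \ne r^*$. Since $|C| > 1$ implies $\nSplits{g,C} \ge 1$ and $g \in \UG{C}$ rules out $\nSplits{g,C} = 1$, we get $\nSplits{g,C} \ge 2$; combined with $\nSplits{g,C} \le \MaxSplits{C} \le \MaxSplits{S} = M$ (Lemma~\ref{lemma:maxsplits}), this gives $1 \le t \le M$ with every $b_j \ge 1$. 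By Lemma~\ref{lemma:c_g_r*}, $\sum_j b_j = \sum_{r \ne r^*} |C_{g,r}| = |C| - \mathbb{I}[g \in C]$. If $g \in C$, then $\sum_j b_j = |C|-1$, so the tuple $(b_1,\dots,b_t,0,\dots,0)$ padded with zeros to length $M$ lies in $A(|C|)$ and, since $d(0)=0$, satisfies $\sum_i d(a_i) = \sum_j d(b_j) = \sum_{r \ne r^*} d(|C_{g,r}|)$. If $g \notin C$, then $\sum_j b_j = |C|$, one too many, so I instead take $(b_1 - 1, b_2, \dots, b_t, 0, \dots, 0)$ padded to length $M$: it sums to $|C|-1$, hence lies in $A(|C|)$, and Corollary~\ref{co:d} gives $d(b_1-1) \le d(b_1)-1$, so $\sum_i d(a_i) \le \sum_j d(b_j) - 1 < \sum_{r \ne r^*} d(|C_{g,r}|)$. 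Either way $d(|C|) \le V_1(g,C)$, which completes the argument.

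The only genuine obstacle is the case $g \notin C$: dropping the $r^*$‑branch (Lemma~\ref{lemma:c_g_r*}) leaves the split sizes summing to $|C|$ rather than the $|C|-1$ demanded by $A(|C|)$, and this mismatch is absorbed precisely by the estimate $d(b-1) \le d(b)-1$ of Corollary~\ref{co:d}. The remaining bookkeeping — that $t \le M$, so padding to length $M$ is legitimate, and that $b_1 \ge 1$, so $b_1-1$ is a valid non‑negative entry — is immediate from $\nSplits{g,C} \le \MaxSplits{S}$ and from $g$ being useful.
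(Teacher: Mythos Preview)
Your proof is correct and follows essentially the same route as the paper: the same case split on whether $g \in C$, using Lemma~\ref{lemma:A} to view the split sizes as a candidate tuple, Lemma~\ref{lemma:c_g_r*} to compute their sum, and Corollary~\ref{co:d} to absorb the off-by-one in the $g \notin C$ case. The only cosmetic difference is that when $g \notin C$ the paper embeds the split-size tuple into $A(|C|+1)$ and applies Corollary~\ref{co:d} at the end to pass from $d(|C|+1)-1$ to $d(|C|)$, whereas you decrement one entry to land directly in $A(|C|)$ and apply Corollary~\ref{co:d} to that entry.
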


\begin{proof}
Let $g^* = \mathrm{argmin}_{g \in \UG{C}}V_1(g,C)$, meaning that
\begin{align*}
    LB_3(C) = \min_{g \in \UG{C}} V_1(g, C) = V_1(g^*, C)
\end{align*}
We have two cases.\\
\\
\textbf{\textit{Case 1}}. If $g \not\in C$ then
\begin{align*}
    \sum_{r \in R\backslash\set{r^*}} \abs{C_{g^*, r}} = \abs{C} - \mathbb{I}[g \in C] = \abs{C} - 1
\end{align*}
implying by Lemma~\ref{lemma:A} that
\begin{align*}
    \left(\abs{C_{g^*, r}}\right)_{r \in R\backslash\set{r^*}} \in A(\abs{C})
\end{align*}
since $g^*$ creates at most $M$ non-empty splits of $C$, and so
\begin{align*}
    LB_3(C) &= V_1(g^*, C)\\
    &= \abs{C} + \sum_{r \in R\backslash\set{r^*}} d\left(\abs{C_{g^*,r}}\right)\\
    &= \abs{C} + \sum_{s \in \set{\abs{C_{g^*,r}} : \ r \in R\backslash\set{r^*}}} d\left(s\right)\\
    &\geq \abs{C} + \min_{(a_1,\dots,a_M) \in A(\abs{C})} d\left(a_i\right)\\
    &= d(\abs{C})\\
    &= LB_1(C)
\end{align*}
\textbf{\textit{Case 2}}.  If $g \in C$ then we borrow some of the work from Case~1.
\begin{align*}
    \sum_{r \in R\backslash\set{r^*}} \abs{C_{g^*, r}} = \abs{C} - \mathbb{I}[g \in C] = \abs{C}
\end{align*}
and so
\begin{align*}
    LB_3(C) &= V_1(g^*, C)\\
    &= \abs{C} + \sum_{r \in R\backslash\set{r^*}} d(\abs{C_{g^*, r}})\\
    &= -1 + (\abs{C} + 1) + \sum_{s \in \set{\abs{C_{g^*,r}} : \ r \in R\backslash\set{r^*}}} d\left(s\right)\\
    &\geq -1 + (\abs{C} + 1) + \min_{(a_1,\dots,a_M) \in A(\abs{C}+1)} d\left(a_i\right)\\
    &= -1 + d(\abs{C} + 1)\\
    &\geq d(\abs{C})\\
    &= LB_1(C)
\end{align*}
Hence in either case, $LB_1(C) \leq LB_3(C)$.
\end{proof}

Note that to reach the desired result of Theorem~\ref{th:lb_same}, we must also show the same for $LB_2(C)$ and $LB_4(C)$.

\begin{corollary}\label{co:lb_2_4}
For any ${C \subseteq S}$, $LB_2(C) \leq LB_4(C)$.
\end{corollary}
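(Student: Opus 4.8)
The plan is to run the proof of Corollary~\ref{co:lb_1_3} again, but with the secret set $S$ replaced throughout by the candidate set $C$ itself — this is legitimate because that argument essentially only uses $C \subseteq S$ together with the monotonicity of $\MaxSplits$ (Lemma~\ref{lemma:maxsplits}), so any set containing all the candidate sets that arise, in particular $C$, can play the role of $S$. Fix $C \subseteq S$; the case $\abs{C} = 0$ is trivial as in Lemma~\ref{lemma:lb1_2}, so assume $\abs{C} \geq 1$, whence $\MaxSplits{C} \geq 1$. Write $M_C = \MaxSplits{C}$, $d_C(n) = \Bound{n, M_C}$ and $A_C(n) = \set{(a_1,\dots,a_{M_C}) \in (\mathbb{N}_0)^{M_C} : \sum_{i=1}^{M_C} a_i = n-1}$.

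First I would check that Lemmas~\ref{lemma:d},~\ref{lemma:A},~\ref{lemma:c_g_r*} and Corollary~\ref{co:d} hold verbatim with $d_C$, $M_C$, $A_C$ in place of $d$, $M = \MaxSplits{S}$, $A$: their proofs use only that $\Bound{\cdot, M_C}$ is strictly increasing in its first argument (true for any fixed branching bound $\geq 1$) and the level-order construction of an $M_C$-tree, while Lemma~\ref{lemma:c_g_r*} involves no branching bound at all. Next I would relate $LB_4(C)$ to a ``$C$-local'' version of $LB_3$. For any $g \in \UG{C}$ and $r \in R$ we have $C_{g,r} \subseteq C$, so $\MaxSplits{C_{g,r}} \leq M_C$ by Lemma~\ref{lemma:maxsplits}, and since decreasing the branching bound can only weakly increase $\Bound{n,\cdot}$ (the monotonicity used in the proof of Lemma~\ref{lemma:lb1_2}), we obtain $LB_2(C_{g,r}) = \Bound{\abs{C_{g,r}}, \MaxSplits{C_{g,r}}} \geq \Bound{\abs{C_{g,r}}, M_C} = d_C(\abs{C_{g,r}})$. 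Summing over $r \in R\backslash\set{r^*}$ and minimising over $g \in \UG{C}$ gives
\begin{align*}
    LB_4(C) = \min_{g \in \UG{C}}\left(\abs{C} + \sum_{r \in R\backslash\set{r^*}} LB_2(C_{g,r})\right) \geq \min_{g \in \UG{C}}\left(\abs{C} + \sum_{r \in R\backslash\set{r^*}} d_C(\abs{C_{g,r}})\right),
\end{align*}
and the right-hand side is exactly $LB_3(C)$ as in Definitions~\ref{def:lb1_2} and~\ref{def:build} once $\MaxSplits{S}$ is replaced by $M_C$.

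It then remains to show that this $C$-local $LB_3$ value is at least $d_C(\abs{C}) = \Bound{\abs{C}, \MaxSplits{C}} = LB_2(C)$ — and this is exactly what the argument of Corollary~\ref{co:lb_1_3} yields once $\MaxSplits{S}$ is replaced by $M_C$. Concretely I would let $g^*$ attain the minimum on the right-hand side above, split into the cases $g^* \notin C$ and $g^* \in C$, use Lemma~\ref{lemma:c_g_r*} to get $\sum_{r \in R\backslash\set{r^*}}\abs{C_{g^*,r}} = \abs{C} - \mathbb{I}[g^* \in C]$, note that $g^* \in \UG{C}$ creates at most $M_C$ non-empty splits of $C$ so that the list of split sizes (padded with zeros) lies in $A_C(\abs{C})$ in the first case and in $A_C(\abs{C}+1)$ in the second, and then finish with the $d_C$-form of Lemma~\ref{lemma:A} — using the $d_C$-form of Corollary~\ref{co:d} in the second case to absorb the stray $-1$. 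Chaining the two displayed inequalities gives $LB_2(C) = d_C(\abs{C}) \leq LB_4(C)$, as required. The one genuinely delicate step — the one I would be most careful about — is the very first one: being sure that Lemmas~\ref{lemma:d},~\ref{lemma:A},~\ref{lemma:c_g_r*}, Corollary~\ref{co:d}, and the whole argument of Corollary~\ref{co:lb_1_3} really are insensitive to which superset of $C$ is designated ``$S$'', so that taking it to be $C$ is valid; granted that, the rest is routine bookkeeping.
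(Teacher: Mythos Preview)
Your proposal is correct and follows essentially the same approach as the paper: both replace $S$ by $C$ in the argument of Corollary~\ref{co:lb_1_3} to obtain $\Bound{\abs{C},\MaxSplits{C}} \leq \min_{g\in\UG{C}}\bigl(\abs{C}+\sum_{r\in R\setminus\{r^*\}}\Bound{\abs{C_{g,r}},\MaxSplits{C}}\bigr)$, and then use Lemma~\ref{lemma:maxsplits} with the monotonicity of $\textsc{Bound}$ in its second argument to push the right-hand side up to $LB_4(C)$. You are more explicit than the paper about why the ``replace $S$ by $C$'' step is legitimate, but the substance is the same.
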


\begin{proof}
First we expand what Corollary~\ref{co:lb_1_3} states using Definition~\ref{def:min_total}.
\begin{align*}
    &\textsc{Bound}(\abs{C}, \MaxSplits{S})\\
    &\leq \min_{g \in \UG{C}} \abs{C} + \!\!\sum_{r \in R\backslash\set{r^*}} \!\!\textsc{Bound}(\abs{C_{g,r}}, \MaxSplits{S})
\end{align*}
This is true for any $C \subseteq S$. Rather, we may also say this is true for any $S \supseteq C$. Clearly $C$ is a super set of itself, so we may then state
\begin{align*}
    &\textsc{Bound}(\abs{C}, \MaxSplits{C})\\
    &\leq \min_{g \in \UG{C}} \abs{C} +\!\! \sum_{r \in R\backslash\set{r^*}}\!\! \textsc{Bound}(\abs{C_{g,r}}, \MaxSplits{C})
\end{align*}
The left hand side of the inequality is $LB_2(C)$, but the right hand side must be further worked. Note that by Lemma~\ref{lemma:maxsplits},
\begin{align*}
    &C_{g,r} \subseteq C \\
    &\implies \MaxSplits{C_{g,r}} \leq \MaxSplits{C}\\
    &\implies \textsc{Bound}(C_{g,r}, \MaxSplits{C_{g,r}}) \\
    &\quad \quad \quad \quad \geq \textsc{Bound}(C_{g,r}, \MaxSplits{C})\\
    &\implies LB_2(C_{g,r}) \geq \textsc{Bound}(C_{g,r}, \MaxSplits{C})
\end{align*}
and so
\begin{align*}
    &LB_2(C) \\
    &\leq \min_{g \in \UG{C}} \abs{C} + \!\!\!\!\sum_{r \in R\backslash\set{r^*}}  \!\!\!\!\textsc{Bound}(\abs{C_{g,r}}, \MaxSplits{C})\\
    &\leq \min_{g \in \UG{C}} \abs{C} +  \!\!\!\! \sum_{r \in R\backslash\set{r^*}}  \!\!\!\! LB_2(C_{g,r})\\
    &= LB_4(C)
\end{align*}
\end{proof}

We can now prove Theorem~\ref{th:lb_same}.\\
\\
\textbf{Theorem~\ref{th:lb_same}}. For any integer ${n \geq 1}$, we have that ${LB_{n}(C) \leq LB_{n+2}(C) \leq \MinTotal{C}}$ for any ${C \subseteq S}$.
\begin{proof}
By Corollary~\ref{co:lb_1_3} and Theorem~\ref{th:lb_adjacent}, we know that for all $C \subseteq S$
\begin{align*}
    LB_1(C) \leq LB_3(C) \leq \MinTotal{C}
\end{align*}
since Theorem~\ref{th:lb_adjacent} implies $LB_i(C) \leq \MinTotal{C}$ for any $i \geq 1$. We may then use Theorem~\ref{th:lb_adjacent} to show that the claim is true for any \textbf{odd} $n$. Repeating this work but using Corollary~\ref{co:lb_2_4} shows that the claim is true for \textbf{even} $n$.
\end{proof}

\subsection{Proof of Proposition~2}
\begin{proof}
Theorem~\ref{th:lb_adjacent} shows that $LB_n(C) \leq \MinTotal{C}$ for any integer $n \geq 1$ and any $C \subseteq S$. From this we may follow a similar proof to Lemma~\ref{lemma:lbi_j_v_i_j} to achieve the desired result.
\end{proof}

\subsection{Proof of Theorem~\ref{th:v_same}}
\begin{proof}
Theorem~\ref{th:lb_adjacent} and Lemma~\ref{lemma:lbi_j_v_i_j} imply that for any $n \geq 1$, we have $V_n(g,C) \leq V_{n+2}(g,C)$ for any $g \in G$ and $C \subseteq S$.\\
\\
We have already shown in Proposition~\ref{p:v_v*} that for any integer $n \geq 1$, $V_n(g, C) \leq V^*(g, C)$ for any guess ${g \in G}$ and any $C \subseteq S$, which then completes the desired result.
\end{proof}

\subsection{Proof of Theorem~\ref{th:v_ends}}
\begin{proof}
First we prove a similar statement about $LB$, that $LB_{2|C| + 1}(C) = \MinTotal{C}$ for any ${C \subseteq S}$. We do this by way of induction.\\
\\
The base case of $|C| = 0$ is trivial.\\
\\
If $\abs{C} = 1$, let $c$ be the single element of $C$.
\begin{align*}
    C_{c,r} \neq \emptyset &\implies c' \in C: \ c' \in C_{c,r}\\
    &\implies c \in C_{c,r}\\
    &\implies a(c,c) = r\\
    &\implies r^* = r
\end{align*}
By Definition~\ref{def:ug}, $\UG{C} = C$, and so
\begin{align*}
    LB_{2|C| + 1}(C) &= LB_3(C)\\
    &= \min_{g \in \UG{C}} V_1(g, C)\\
    &= V_1(c, C)\\
    &= |C| + \sum_{r \in R\backslash\set{r^*}} LB_1(C_{c,r})\\
    &= |C| + \sum_{r \in R\backslash\set{r^*}} LB_1(\emptyset)\\
    &= 1\\
\end{align*}
We repeat the same process for $\MinTotal{C}$.
\begin{align*}
    \MinTotal{C} &= \min_{g \in \UG{C}} V^*(g, C)\\
    &= V^*(c, C)\\
    &= |C| + \sum_{r \in R\backslash\set{r^*}} \MinTotal{C_{c,r}} \\
    &= |C| + \sum_{r \in R\backslash\set{r^*}} \MinTotal{\emptyset} \\
    &= 1
\end{align*}
so the claim is true if $|C| = 1$.\\
\\
Assume the claim is true for any $|C| \leq M$ for some integer $M \geq 1$, and
suppose we have some $C' \subseteq S$ where $|C'| = M + 1$. Then we have
\begin{align*}
LB_{2|C'| + 1}(C') = \min_{g \in \UG{C'}} |C'| +\!\! \sum_{r \in R\backslash\set{r^*}}\!\! LB_{2M+1}(C'_{g,r})
\end{align*}



Because we are only considering $g \in \UG{C'}$ and $\abs{C'} = M + 1 > 1$, Property~\ref{property:ug} implies that $C'_{g,r} \subset C'$ for any $r \in R$. This shows that $\abs{C'_{g,r}} \leq M$, and it follows from Theorem~\ref{th:lb_same} that
\begin{align*}
    LB_{2\abs{C'_{g,r}} + 1}(C'_{g,r}) \leq LB_{2M + 1}(C'_{g,r}) \leq \MinTotal{C'_{g,r}}
\end{align*}
and by the induction step
\begin{align*}
    LB_{2\abs{C'_{g,r}} + 1}(C'_{g,r}) = \MinTotal{C'_{g,r}}
\end{align*}
and so
\begin{align*}
    LB_{2M + 1}(C'_{g,r}) = \MinTotal{C'_{g,r}}
\end{align*}
We can use this to note that for any $g \in G$,
\begin{align*}
    V_{2M+1}(g, C') &= \abs{C'} + \sum_{r \in R\backslash\set{r^*}} LB_{2M+1}(C'_{g,r})\\
    &= \abs{C'} + \sum_{r \in R\backslash\set{r^*}} \MinTotal{C'_{g,r}}\\
    &= V^*(g, C')
\end{align*}
Lastly,
\begin{align*}
    LB_{2\abs{C'}+1} &= LB_{2M+3}(C')\\
    &= \min_{g \in \UG{C'}} V_{2M+1}(g,C')\\
    &= \min_{g \in \UG{C'}} V^*(g,C')\\
    &= \MinTotal{C'}
\end{align*}
We have shown that if $LB_{2|C| + 1}(C) = \MinTotal{C}$ for any $|C| \leq M$, then it is also true for any $C'$ where $|C'| = M + 1$. With the base cases and the induction step proved, we have shown the claim to be true for any $C \subseteq S$.
\end{proof}

\subsection{Proof of Theorem~\ref{th:ub}}
\begin{proof}
Let $C$ be an arbitrary candidate set. By Theorem~\ref{th:lb_adjacent}, we may note the following for any $g \in G$ where $i_g$ is odd:
\begin{align*}
    LB_{i_g}(C) \leq LB_{i_g + 1}(C) \leq \textsc{UB}(C)
\end{align*}
and by Lemma~\ref{lemma:lbi_j_v_i_j}, it is also true that
\begin{align*}
    V_{i_g + 1}(g,C) \geq V_{i_g}(g, C) \geq \textsc{UB}(C)
\end{align*}

So we may replace any odd $i_g$ with $i_g + 1$, making it even. Hence w.l.o.g.\ assume all $i_g$ are even. Define
\begin{align*}
    I = \max_{g \in G} i_g
\end{align*}
noting that $I$ must also be even. Theorem~\ref{th:v_same} lets us state for any $g \in G$ that
\begin{align*}
    V_I(g, C) \geq V_{i_g}(g,C)
\end{align*}
and so
\begin{align*}
    \textsc{UB}(C) &\leq \min_{g \in G} V_{i_g}(g,C)\\
    &\leq \min_{g \in G} V_{I}(g,C)\\
    &= LB_{I + 2}(C)\\
    &\leq \MinTotal{C}
\end{align*}
$\textsc{UB}(C)$ however was defined to be an upper bound for $\MinTotal{C}$, and so we get the desired result.
\end{proof}
\section{Examples}
This paper stated several definitions and theorems, all of which are meant to apply to any general guessing game. In this section, we provide concrete applications of some of these as to provide some clarity. We chose not to include these in the paper due to space constraints, and as we did not feel that they were crucial to understanding the paper's contributions.

Recall that we denote by $G_W$ the set of allowed guesses for Wordle (as in the original version), $S_W$ is the set of allowed secrets and $R_W$ are the possible Wordle responses (colours encoded as digit strings according to Definition~\ref{def:wordle}).

\subsection{Example of Definition~\ref{def:candidates}}
Suppose we have the following pairs of guesses and responses for Wordle:
\begin{itemize}
    \item $g_1 = \code{COILS}$, responded with $r_1 = \code{00010}$
    \item $g_2 = \code{ALPHA}$, responded with $r_2 = \code{01000}$
    \item $g_3 = \code{OMEGA}$, responded with $r_3 = \code{01100}$
\end{itemize}
Then the candidate set as defined as
\begin{align*}
    C =& \set{s \in S_W: a(\code{COILS}, s) = \code{00010}} \\
    &\cap \set{s \in S_W: a(\code{ALPHA}, s) = \code{01000}}\\
    &\cap \set{s \in S_W: a(\code{OMEGA}, s) = \code{01100}}\\
    =& \set{\code{LEMUR},\code{LUMEN},\code{MELEE}}
\end{align*}

\subsection{Example of Definition~\ref{def:strategy}}
A simple strategy to play Wordle would be ``guess the alphabetically first candidate''. It should be clear that following this strategy will eventually terminate the game. However, Definition~\ref{def:strategy} does not explicitly rule out ``obviously bad'' strategies such as ``guess \code{QAJAQ} regardless of the last response''; we still consider this is a valid strategy.

\subsection{Example of Definition~\ref{def:split}}
Suppose $C = \{$\code{COILS}, \code{DONUT}, \code{FINAL}, \code{MELEE}, \code{OMEGA}, \code{REALM}, \code{TITAN}, \code{TRIAD}$\}$. Guessing $g = \code{ALPHA}$ then creates the following splits
\begin{align*}
    C_{g, \code{00000}} &= \set{\code{DONUT}}\\
    C_{g, \code{00002}} &= \set{\code{OMEGA}}\\
    C_{g, \code{01000}} &= \set{\code{COILS}, \code{MELEE}}\\
    C_{g, \code{10000}} &= \set{\code{TITAN}, \code{TRIAD}}\\
    C_{g, \code{11000}} &= \set{\code{FINAL}, \code{REALM}}
\end{align*}
$C_{g,r}$ is empty for the other possible responses.

\subsection{Example of Definition~\ref{def:turns_needed}}
Suppose we follow the strategy of guessing the alphabetically first candidate on each turn. Suppose the secret is \code{SNAKE}. Under this strategy, the first guess we make is the alphabetically first secret (since $C = S$ at the start of the game). For Wordle this is \code{ABACK} and guessing this gives response $a(\code{ABACK}, \code{SNAKE}) = \code{00201}$. Filtering for candidates then leaves 13 candidates.

The first candidate is now \code{DRAKE}, giving response $a(\code{DRAKE}, \code{SNAKE}) = \code{00222}$, which leaves 5 possible candidates.

The first candidate is now \code{FLAKE}, giving response $a(\code{FLAKE}, \code{SNAKE}) = \code{00222}$, which leaves 4 possible candidates.

The first candidate is now \code{QUAKE}, giving response $a(\code{QUAKE}, \code{SNAKE}) = \code{00222}$, which leaves 3 possible candidates.

The first candidate is now \code{SNAKE}, giving response $a(\code{SNAKE}, \code{SNAKE}) = \code{22222}$ and we end the game, because this is $r^*$ for Wordle.

We needed to submit 5 guesses to receive the $r^*$ response, so the $\textsc{TurnsNeeded}(\sigma, \code{SNAKE}) = 5$, where $\sigma$ is the strategy of guessing the alphabetically first candidate at each turn.

\begin{table*}[!ht]
    \centering
    \begin{tabular}{lrrrrr}
    \toprule
    $g$ &  $\textsc{InSet}(g, S_W)$ &  $\textsc{MSS}(g, S_W)$ &  $\textsc{ESS}(g, S_W)$ &   $\textsc{Information}(g, S_W)$ &  $\textsc{MostParts}(g, S_W)$ \\
    \midrule
    \code{QAJAQ} &       1 &            1369 &      925.101 &  14898.98 &         -18 \\
    \code{XYLYL} &       1 &            1334 &      856.50 &  14417.32 &         -28 \\
    \code{ABACK} &       0 &             925 &      444.38 &  12292.99 &         -57 \\
    \code{ADIEU} &       1 &             284 &      123.70 &  10105.95 &         -80 \\
    \code{SALET} &       1 &             221 &       71.27 &   8572.31 &        -148 \\
    \code{RAISE} &       0 &             168 &       61.00 &   8502.78 &        -132 \\
    \code{SLATE} &       0 &             221 &       71.57 &   8538.30 &        -147 \\
    \code{TRACE} &       0 &             246 &       74.02 &   8578.78 &        -150 \\
    \code{CRATE} &       0 &             246 &       72.90 &   8571.84 &        -148 \\
    \bottomrule
    \end{tabular}
    \caption{Using existing valuations on Wordle}
    \label{tab:ex:vals}
\end{table*}

\subsection{Example of Definition~\ref{def:val:strat}}
We provide some example values for the valuations defined in Subsection~\ref{subsec:strats}, applied on Wordle. Results are shown in Table~\ref{tab:ex:vals}.

We can see that some of the ``obviously'' bad starting words such as \code{QAJAQ} and \code{XYLYL} all have high scores regardless of the valuation. The better words however have much noticeably lower scores; but depending on the valuation the choice for the `best' guess changes. For example, following the \textsc{MaxSizeSplit} valuation (abbreviated to \textsc{MSS}), the best guess in this list is \code{RAISE}. The \textsc{MostParts} valuation however would determine that \code{TRACE} is the best guess. Moreover we do see some instances of equal scores being assigned to different guesses; for a ranking system this is clearly not desirable.\\
\\
The initial guess to be submitted (from all of $G_W$) as determined by each valuation-based strategy is as follows:
\begin{align*}
    \sigma_\textsc{InSet}(S_W) &= \code{ABACK}\\
    \sigma_\textsc{MaxSizeSplit}(S_W) &= \code{AESIR}\\
    \sigma_\textsc{ExpSizeSplit}(S_W) &= \code{ROATE}\\
    \sigma_\textsc{Information}(S_W) &= \code{SOARE}\\
    \sigma_\textsc{MostParts}(S_W) &= \code{TRACE}
\end{align*}

\subsection{Example of Definition~\ref{def:val:combined}}
Refer back to Table~\ref{tab:ex:vals}. If we were to use the combined valuation $V = \tuple{\textsc{MostParts},\textsc{MSS}}$, then we would have
\begin{align*}
    V(\code{SALET}, S_W) &= (-148, 221)\\
    V(\code{CRATE}, S_W) &= (-148, 246)
\end{align*}
By lexicographical ordering then we would consider the \code{SALET} to be the better guess. Note that without the inclusion of the \textsc{MSS} valuation, both words would have been assigned the same score, and we'd default to choosing \code{CRATE} due to alphabetical ordering.

\subsection{Example of Definition~\ref{def:ug}}
If there are no candidates, then it should make sense that there's nothing worth guessing. If there's only one candidate, then that candidate is the only possible secret.\\
\\
Consider the example $C = \{$\code{COILS}, \code{OMEGA}, \code{REALM}$\}$. We check if $g = \code{ALPHA}$ is useful, by writing out the splits it creates:
\begin{align*}
    C_{g, \code{01000}} &= \set{\code{COILS}}\\
    C_{g, \code{00002}} &= \set{\code{OMEGA}}\\
    C_{g, \code{11000}} &= \set{\code{REALM}}
\end{align*}
There is more than 1 non-empty split, so $\code{ALPHA}$ is useful w.r.t to $C$.\\
\\
Repeating this for $g = \code{FUZZY}$, we only have one non-empty split:
\begin{align*}
    C_{g, \code{00000}} &=  \set{\code{COILS}, \code{OMEGA}, \code{REALM}}
\end{align*}
This means that guessing \code{FUZZY} gives no useful information in telling which of the candidates may be the secret. This example also demonstrates why Property~\ref{property:ug} is equivalent to Definition~\ref{def:ug}.

\subsection{Example of Definition~\ref{def:maxsplits}}
For Wordle's secret set $S_W$, $\textsc{MaxSplits}(S_W) = 150$. This is only achieved by guessing $g = \code{TRACE}$.

\subsection{Example of Lemma~\ref{lemma:maxsplits}}
It is known that $\textsc{MaxSplits}(S_W) = 150$. Let $C'$ be the subset of $S_W$ that only contains words that start with \code{A}. We can calculate then that $\textsc{MaxSplits}(C') = 49$.

\subsection{Example of Lemma~\ref{lemma:lb1_2}}
Note that from the Definition~\ref{def:lb1_2}, $LB_1(S_W) = LB_2(S_W)$, so this doesn't make for a good example. As such we demonstrate this on the subset of $S_W$ of only words that start with \code{J}. This leaves 20 candidates; call this subset $C$.
\begin{align*}
    LB_1(C) &= 39\\
    LB_2(C) &= 42
\end{align*}
The code run to calculate this is provided in a footnote of Section~\ref{subsec:combining_valuations}.

\subsection{Example of Theorems~\ref{th:lb_adjacent} and \ref{th:lb_same}}
We use the same subset $C$ as defined in the previous subsection. Normally, $\textsc{MinTotal}$ would be impossible to calculate for large candidate sets due to the recursive nature of Definition~\ref{def:min_total}. We can calculate this for $C$ since there are only 20 candidates.
\begin{align*}
    LB_3(C) &= 44\\
    LB_4(C) &= 44\\
    \textsc{MinTotal}(C) &= 44
\end{align*}
So for this example, it holds that
\begin{align*}
    LB_1(C) \leq LB_2(C) \leq \textsc{MinTotal}(C)\\
    LB_3(C) \leq LB_4(C) \leq \textsc{MinTotal}(C)
\end{align*}
It also holds that
\begin{align*}
    LB_1(C) \leq LB_3(C) \leq \textsc{MinTotal}(C)\\
    LB_2(C) \leq LB_4(C) \leq \textsc{MinTotal}(C)
\end{align*}

\end{document}